%
\documentclass[10pt]{llncs}

\usepackage{tikz}

\newcommand\copyrighttext{%
  \footnotesize This paper is a preprint of a paper submitted to and accepted for publication in
ICPR 2014 and is subject to IEEE copyright.}
\newcommand\copyrightnotice{%
\begin{tikzpicture}[remember picture,overlay]
\node[anchor=south,yshift=10pt] at (current page.south) {\fbox{\parbox{\dimexpr\textwidth-\fboxsep-\fboxrule\relax}{\copyrighttext}}};
\end{tikzpicture}%
}

\usepackage{amsmath} 
\usepackage{amsfonts}
\usepackage{nicefrac}
\usepackage{algorithm}
\usepackage{algpseudocode}

\usepackage[caption=false,font=footnotesize]{subfig}
\usepackage{graphicx}
\usepackage{wrapfig}

\algtext*{EndWhile}
\algtext*{EndIf}


%
%

\newcommand{\N}{\mathbb{N}}
\newcommand{\R}{\mathbb{R}}


\newcommand{\p}{\text{Pr}}
\newcommand{\Eta}{\text{H}}
\newcommand{\kld}{\text{KL}}
\newcommand{\Var}{\text{Var}}


\newcommand{\e}{e}
\newcommand{\where}{\,\vline~}


\DeclareMathOperator{\E}{E}
\DeclareMathOperator*{\argmax}{arg\,max}

\begin{document}

\mainmatter              

\title{A Theoretical and Experimental Comparison of the EM and SEM Algorithm}
\titlerunning{Comparison of the EM and SEM Algorithm}  

\author{Johannes Bl\"omer, Kathrin Bujna, and Daniel Kuntze}
\authorrunning{Bl\"omer et al.} 
\tocauthor{Johannes Bl\"omer, Kathrin Bujna, and Daniel Kuntze} 
\institute{
University of Paderborn, 33098 Paderborn, Germany,\\
\email{\{bloemer, kathrin.bujna, kuntze\}@uni-paderborn.de}
}

\maketitle              

\copyrightnotice

\begin{abstract}
In this paper we provide a new analysis of the SEM algorithm.
Unlike previous work, we focus on the analysis of a single run of the algorithm.
First, we discuss the  algorithm for general mixture distributions. 
Second, we consider Gaussian mixture models and show that with high probability the update equations of the EM algorithm and its stochastic variant are almost the same, given that the input set is sufficiently large.
Our experiments confirm that this still holds for a large number of successive update steps.
In particular, for Gaussian mixture models, we show that the stochastic variant runs nearly twice as fast.
\end{abstract}

\section{Introduction}
Training the parameters of probabilistic models to describe a given data set is a central task in the field of data mining and machine learning. 
The Expectation-Maximization (EM) algorithm \cite{dempster77} is a general scheme for finding maximum-likelihood solutions for this parameter estimation problem.
It is used when the observed data $X$ can be seen as incomplete and when the problem given the corresponding complete data $(X,Z)$ is easy to solve.
Formally, the parameter estimation problem is to maximize the likelihood function
$L(\theta|X)=p(X|\theta)=\int p(X,Z|\theta)\, dZ$
over the choice of model parameters $\theta$.
Starting with an initial set of model parameters, the EM algorithm (cf. Alg.~\ref{em_algo}) repeatedly performs two steps. 
The first step derives an optimal distribution for the hidden values.
The second step computes the new set of parameters by maximizing the expectation (over the hidden values) of the complete-data likelihood.
During the decades since its first presentation, a lot of work has been done to improve the EM algorithm \cite{mclachlan08}.
Most of these improvements deal with two major drawbacks. 
On the one hand, the convergence of the EM algorithm can be very slow.
On the other hand, the EM algorithm is prone to converge only to a local maximum, or even worse, to a saddle point of the likelihood function.

In this paper we analyze a probabilistic variant of the EM algorithm, known as the Stochastic EM or SEM algorithm \cite{celeux85} (cf. Alg.~\ref{sem_algo}), and its relation to the classical EM algorithm.
Instead of maximizing the expectation in the second step, the SEM algorithm uses the distribution from the first step to sample an assignment for the hidden values $Z$.
Afterwards, it maximizes the complete-data likelihood only for that fixed assignment.
Its inherent randomness technically allows the algorithm to escape from a saddle point or an undesired local maximum of the likelihood function.

\paragraph{Related Work}
The models generated by the SEM algorithm are studied in \cite{ip94}.
For mixtures of distributions from the exponential family, the author shows that the sequence of models generated by the SEM iterations is an ergodic Markov chain converging weakly to a stationary distribution over models.
Furthermore, it is shown that under appropriate assumptions the mean of this stationary distribution converges to the maximum-likelihood estimate.
However, the mean of the stationary distribution usually can not be obtained by a single run of the algorithm.
Instead, a large number of restarts is necessary to retrieve a reasonable approximation of the mean distribution.

Some initial experimental comparison between EM and SEM algorithm can be found in \cite{dias04}.
The algorithms are applied to two small one-dimensional data sets (containing $150$ and $174$ points) and use Gaussian mixtures with $3$ and $2$ components, respectively.
The authors evaluate the log-likelihood of the sequence of the produced solutions and the log-likelihood surface in the neighbourhood of these solutions.
Furthermore, they compare the final Gaussian mixture models returned by the algorithms.
Their results indicate that the SEM algorithm converges faster and more reliable than the EM algorithm.

\begin{figure}[tb]
\begin{minipage}{.49\textwidth}\hrule\vspace{1ex}
\textbf{EM}($X$, $\theta$)\hfill\emph{Algorithm~\ref{em_algo}}
\hrule
\begin{algorithmic}[0]
	\While{\textless\textit{condition}\textgreater}
		\State $q(Z) = p(Z|X,\theta)$
		\State maximize $\E_{z\sim q}\left[ \ln p(X,Z|\theta) \right]$ w.r.t. $\theta$
	\EndWhile
\end{algorithmic}
\hrule

\vspace{3ex}

\caption{Classical EM algorithm}\label{em_algo}
\end{minipage}
\ 
\begin{minipage}{.49\textwidth}
\hrule\vspace{1ex}
\textbf{SEM}($X$, $\theta$)\hfill\emph{Algorithm~\ref{sem_algo}}
\hrule
\begin{algorithmic}[0]
	\While{\textless\textit{condition}\textgreater}
		\State $q(Z) = p(Z|X,\theta)$
		\State draw $z \propto q(Z=z)$
		\State maximize $p(X,Z=z|\theta)$ w.r.t. $\theta$
	\EndWhile
\end{algorithmic}
\hrule
\caption{Stochastic EM algorithm}\label{sem_algo}
\end{minipage}
\end{figure}


\paragraph{Our Contribution}
Our results suggest that in most cases where the classical EM algorithm is applicable, one can use the SEM algorithm to obtain similar results more efficiently.
The previous theoretical analysis of the SEM algorithm \cite{ip94} does not give any guarantees for a single run of the SEM algorithm.
In this paper, we analyze a single run of the SEM algorithm in relation to the classical EM algorithm.
First, we show that mixture distributions are suitable candidates for the application of the SEM algorithm.
Second, for Gaussian mixture models and for sufficiently large input sets, we show that in a single update step  with high probability the algorithms yield very similar results.

Previous experimental comparisons \cite{dias04} consider only small examples and do not compare the solutions after single update steps.
In this paper, we present experiments which confirm that our theoretical results even hold for a large number of successive steps.
Moreover, we show that the simplified maximization step of the SEM algorithm leads to considerably better running times.

\section{Stochastic EM Algorithm for Gaussian Mixtures}
In this section we show that general mixture distributions are suitable candidates for the SEM algorithm.
Moreover, we show that for mixtures of Gaussians, the EM and SEM algorithm compute almost the same parameter updates.

\subsection{Mixture Distributions}\label{sem_mixture}
For a mixture of $K\in\bbbn$ component distributions, the parameter vector takes the form
$\theta=(w_1,\ldots,w_K,\theta_1,\ldots,\theta_K)$,
where $w_1,\ldots,w_K\in\bbbr$ with $\sum_{k=1}^Kw_k=1$ are the weights, and $\theta_k$ is the parameter vector for the $k$-th component.

Consider an observation $X=(x_1,\ldots,x_N)$ consisting of $N\in\bbbn$ independent draws from $\theta$.
Drawing a single observation $x_n$ can be described as a two step process.
First, choose one of the component distributions $\theta_k$ with probability proportional to its weight $w_k$. 
Second, draw $x_n$ from the chosen component distribution.
Then, the natural choice for the hidden values is the matrix $Z=(z_{nk})\in\{0,1\}^{N\times K}$ with $\sum_{k=1}^Kz_{nk}=1$ for $n=1,\ldots,N$ that indicates whether a component is responsible for an observation.
Using this definition, we get that the probability of a single observation is
$p(X=x_n|\theta)=\sum_Zp(Z|\theta)p(X=x_n|Z,\theta)=\sum_{k=1}^K w_k p(x_n|\theta_k)$.

Proposition~\ref{prop:mixture} yields the proper maximization step of the SEM algorithm.
\begin{proposition}\label{prop:mixture}
Let $\theta$ denote a parameter vector of a mixture model.
Then, for any fixed $X$ and $Z=(z_{nk})$ defined as above, the complete-data likelihood $p(X,Z|\theta)$ is maximized w.r.t. $\theta$ by setting
\[\textstyle w_k=\frac{1}{N}\sum_{n=1}^Nz_{nk}\quad\text{and}\quad\theta_k=\argmax_{\theta_k'} p(Y_k|\theta_k')\]
for $k=1,\ldots,K$, where $Y_k=\{x_n|z_{nk}=1\}$.
\end{proposition}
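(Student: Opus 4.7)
The plan is to write the complete-data likelihood in a factored form that cleanly separates the weights from the component parameters, and then maximize each group independently.

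First, I would use the two-step sampling description to write, for each observation $n$,
\[ p(x_n, z_n \cond \theta) = \prod_{k=1}^K \bigl(w_k\, p(x_n \cond \theta_k)\bigr)^{z_{nk}}, \]
exploiting that exactly one $z_{nk}$ equals $1$. Since the $N$ observations are independent, taking the logarithm gives
\[ \ln p(X, Z \cond \theta) = \sum_{k=1}^K \Bigl(\sum_{n=1}^N z_{nk}\Bigr) \ln w_k + \sum_{k=1}^K \sum_{n : z_{nk}=1} \ln p(x_n \cond \theta_k). \]
Because no $\theta_k$ occurs in the first sum and no $w_k$ in the second, and because the inner sums over $k$ in the second term involve disjoint sets of parameters, the maximization splits into $K+1$ independent subproblems.

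Second, for each component $k$, the relevant term is $\sum_{n:z_{nk}=1}\ln p(x_n\cond\theta_k)=\ln p(Y_k\cond\theta_k)$ by independence of the $x_n$ within $Y_k$. Maximizing this over $\theta_k'$ gives exactly the claimed $\theta_k=\argmax_{\theta_k'}p(Y_k\cond\theta_k')$, with no further work needed since this is the stated definition.

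Third, I would handle the weights by setting $N_k=\sum_n z_{nk}$ and maximizing $\sum_k N_k\ln w_k$ subject to $\sum_k w_k=1$ (and $w_k\ge 0$). A standard Lagrange multiplier argument with multiplier $\lambda$ yields $N_k/w_k=\lambda$; summing over $k$ and using $\sum_k N_k=N$ forces $\lambda=N$, so $w_k=N_k/N=\frac1N\sum_n z_{nk}$ as claimed. Concavity of $\ln$ confirms this critical point is a maximum.

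No step poses a real obstacle; the only mild subtlety is justifying that the joint maximization decouples, which follows from the product structure of $p(X,Z\cond\theta)$ and the fact that the constraint $\sum_k w_k=1$ couples only the weights among themselves.
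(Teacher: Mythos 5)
Your proof is correct, and its overall structure matches the paper's: both factor the complete-data likelihood into a weight-only part and a component-parameter part (the paper writes this as $p(X,Z\cond\theta)=p(Z\cond\theta)\,p(X\cond Z,\theta)$, which after taking logs gives exactly the two sums in your display) and then observe that the component terms $\ln p(Y_k\cond\theta_k)$ decouple over $k$. The one genuine divergence is the weight subproblem. The paper isolates it as a separate lemma and maximizes $\sum_k c_k\log p_k$ over the simplex by rewriting the objective (up to an additive constant and scaling) as $-\kld(q,p)$ with $q_k=c_k/N$, so optimality follows immediately from nonnegativity of the Kullback--Leibler divergence, with uniqueness for free. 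You instead run a Lagrange-multiplier computation and appeal to concavity of $\log$ to certify the critical point as a global maximum. Both are standard and both work; the KL argument is slightly cleaner in that it is a one-line global inequality needing no first- or second-order conditions and no separate uniqueness discussion, whereas your route is more mechanical and generalizes without needing to recognize the entropy identity. In either treatment one should tacitly adopt the convention $0\cdot\log 0=0$ to cover components with $\sum_n z_{nk}=0$ (the paper sidesteps this concern in its ``Limitations'' remark by requiring each $Y_k$ to be nonempty).
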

\begin{proof}
We denote the new model computed by the SEM algorithm in some fixed round by $\theta^{X,Z}$.
Since
\[\textstyle p(X,Z|\theta)=p(Z|\theta)\cdot p(X|Z,\theta),\]
it is sufficient to show that $\theta^{X,Z}$ maximizes $p(Z|\theta)$ as well as $p(X|Z,\theta)$.
Using that $p(z_{nk}=1|\theta)=w_k$, $z_{nk}\in\{0,1\}$, and $\sum_{k=1}^K z_{nk}=1$, we conclude
\[\textstyle p(Z|\theta)=\prod_{n=1}^N\sum_{k=1}^K z_{nk}w_k=\prod_{k=1}^Kw_k^{c_k},\]
where $c_k=\sum_{n=1}^Nz_{nk}$.
From this, we derive $\log p(Z|\theta)=\sum_{k=1}^K c_k\log(w_k)$.
Applying Lemma~\ref{lem:sumlog} yields that $\theta^{X,Z}$ maximizes $p(Z|\theta)$.

It remains to show that $\theta^{X,Z}$ maximizes $p(X|Z,\theta)$.
However, due to the independence of the observations, we have
\begin{align*}
\textstyle\ln p(X|Z,\theta)=\sum_{n=1}^N\ln p(x_n|Z,\theta)=\sum_{n=1}^N\sum_{k=1}^K z_{nk}\ln p(x_n|\theta_k)= \sum_{k=1}^K \ln p(Y_k|\theta_k).
\end{align*}
\end{proof}

\begin{lemma}\label{lem:sumlog}
Let $K,N,c_1,...,c_K\in\N$ with $\sum_{k=1}^Kc_k=N$ and denote  the standard $(K-1)$-simplex by $\textstyle S^{K-1}=\{(p_1,\ldots,p_K)\in\R^K_+\where\sum_{k=1}^Kp_k=1\}$.
Then, the function $f:S^{K-1}\rightarrow\R$ with
$f(p)=f(p_1,\ldots, p_K)=\sum_{k=1}^Kc_k\log(p_k)$
is maximized by $p=(\frac{c_1}{N},\ldots,\frac{c_K}{N})$.
\end{lemma}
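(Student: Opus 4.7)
The plan is to recognize this as (essentially) the maximum-likelihood estimation of a categorical distribution from observed counts, which admits two clean proofs: one via Lagrange multipliers, the other via the nonnegativity of the KL divergence (Gibbs' inequality). I would present the KL-based argument since it avoids discussing second-order conditions, although I would mention that either route works.

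Concretely, I would first set $q_k = c_k/N$ and observe that $q=(q_1,\dots,q_K)$ lies in $S^{K-1}$ because $\sum_k c_k = N$. Then, for any $p\in S^{K-1}$ with $p_k>0$ whenever $c_k>0$, I would write
\[
f(q)-f(p) \;=\; \sum_{k=1}^K c_k \log\!\frac{q_k}{p_k} \;=\; N\sum_{k=1}^K q_k\log\!\frac{q_k}{p_k} \;=\; N\cdot\kld(q\Vert p) \;\ge\; 0,
\]
with equality only when $p=q$, by Gibbs' inequality. This gives $f(p)\le f(q)$ for all $p$, which is exactly the claim. As an alternative, I would mention that the same conclusion follows by forming the Lagrangian $\sum_k c_k\log p_k - \lambda(\sum_k p_k - 1)$, solving $c_k/p_k=\lambda$ to obtain $p_k = c_k/N$, and invoking strict concavity of $f$ on the simplex to conclude that this critical point is the unique global maximum.

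The only subtlety, and the part I would be careful about, is the boundary of the simplex, together with the case where some counts vanish. Under the convention $0\log 0 = 0$, any $k$ with $c_k=0$ contributes nothing to $f$ and can be ignored, reducing the problem to the indices with $c_k>0$. On those indices, $f(p)\to -\infty$ whenever any $p_k\to 0$, so the maximum is attained in the relative interior and the Gibbs argument above applies verbatim. I would state these conventions explicitly at the start of the proof so that the inequality $\kld(q\Vert p)\ge 0$ is unambiguously valid on the whole domain $S^{K-1}$.
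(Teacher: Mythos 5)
Your proof is correct and uses essentially the same argument as the paper: rewriting the difference from the candidate maximizer $q_k=c_k/N$ as a multiple of $\kld(q\Vert p)$ and invoking its nonnegativity (the paper phrases this as maximizing $f(p)/N+\Eta(q)=-\kld(q,p)$, which is the same computation). Your extra care about the boundary and the $c_k=0$ convention is a welcome addition but does not change the route.
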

\begin{proof}
Let $q=(q_1,\ldots,q_K)$ with $q_k=\frac{c_k}{N}$ for $k=1,\ldots,K$.
Instead of $f$, we maximize
\begin{align*}
\textstyle\frac{f(p)}{N}+\Eta(q)
&\textstyle=\sum_{k=1}^Kq_k\log p_k-\sum_{k=1}^Kq_k\log q_k \textstyle=\sum_{k=1}^Kq_k\log\frac{p_k}{q_k}=-\kld(q,p)
\end{align*}
where $\Eta(\cdot)$ denotes the entropy and $\kld(\cdot,\cdot)$ the Kullback-Leibler divergence.
The lemma follows from $\kld(q,p)\geq0$ and $\kld(q,p)=0$ if and only if $p=q$.
\end{proof}

\paragraph{Limitations.}\label{mixture_limitations}
For the maximization step to be well defined, we have to assure that for $k=1,\ldots,K$ it holds that $|Y_k|=\sum_{n=1}^N z_{nk}\geq\zeta$ for some model dependent threshold $\zeta\in\bbbn$ (cf. Sec.~\ref{sem_gmm}).
Otherwise, it might not be possible to determine the new parameters. 
In these cases, it might be a good idea to replace the under-determined component.

\subsection{Gaussian Mixtures}\label{sem_gmm}
For a mixture of $K\in\bbbn$ multivariate Gaussians over $\bbbr^D$ the parameters of the $k$-th component consist of a mean $\mu_k\in\bbbr^D$ and a covariance matrix $\Sigma_k\in\bbbr^{D\times D}$.
That is, the parameter vector of the model takes the form
$\theta=(w_1,\ldots,w_K,\mu_1,\ldots,\mu_K,\Sigma_1,\ldots,\Sigma_K)$
and $X=(x_1,\ldots,x_N)$ with $x_n\in\bbbr^D$.
Then, the classical EM algorithm in each step deterministically computes
\[ \textstyle w_k^{EM}=\frac{1}{N}\sum_{n=1}^Np_{nk}\ ,\qquad\mu_k^{EM}=\frac{\sum_{n=1}^Np_{nk}x_n}{\sum_{n=1}^Np_{nk}}\ ,\]
\begin{equation}\label{sem_gmm:em_update}
\textstyle \Sigma_k^{EM}=\frac{\sum_{n=1}^Np_{nk}(x_n-\mu_k^{EM})(x_n-\mu_k^{EM})^T}{\sum_{n=1}^Np_{nk}}\ ,
\end{equation}
for $k=1,\ldots,K$, where 
\begin{equation}\label{sem_gmm:pnk}
\textstyle p_{nk} = p(z_{nk}=1|X,\theta)=\E[z_{nk}|X,\theta]. 
\end{equation}

After sampling the indicator matrix $Z=(z_{nk})\in\{0,1\}^{N\times K}$, the SEM algorithm computes the following updates for $k=1,\ldots,K$:
\[ \textstyle w_k^{SEM}=\frac{1}{N} \sum_{n=1}^N z_{nk}\ ,\qquad\mu_k^{SEM}=\frac{\sum_{n=1}^Nz_{nk}x_n}{\sum_{n=1}^Nz_{nk}}\ ,\]
\begin{equation}\label{sem_gmm:sem_update}
\textstyle \Sigma_k^{SEM}=\frac{\sum_{n=1}^Nz_{nk}(x_n-\mu_k^{SEM})(x_n-\mu_k^{SEM})^T}{\sum_{n=1}^Nz_{nk}}\ ,
\end{equation}
where $\mu_k^{SEM}$ and $\Sigma_k^{SEM}$ are the maximum likelihood solution for the parameter estimation problem of a single Gaussian distribution with respect to observation $Y_k=\{x_n|z_{nk}=1\}$. 
In contrast to the EM algorithm, the parameters computed by the SEM algorithm are in fact random variables.
For a derivation of these update equations see Prop.~\ref{prop:mixture} and e.g. \cite{bishop06}.

As already observed in Sec.~\ref{mixture_limitations}, strictly speaking, we have to assure that for $k=1,\ldots,K$ it holds $|Y_k|\geq D+1$.
Otherwise, $\Sigma_k^{SEM}$ would not be symmetric positive definite.
Assuming that the observations are given in general linear position, assigning at least $D+1$ points to each Gaussian is sufficient.


\subsection{Proximity of the Update Equations}\label{sem_proximity}

In this section we give probabilistic bounds on the differences between the computations of the EM and SEM algorithm.
Before we state our main results, we provide some preliminary definitions.
For vectors $v\in\bbbr^D$ we denote the $i$-th coordinate of $v$ by $(v)_i$, while for matrices $M\in\bbbr^{D\times D}$ we denote the $(i,j)$-th entry of $M$ by $(M)_{ij}$.
We define the spread of the $d$-th coordinate of the input set by
\begin{equation}\label{eq:spread}
\textstyle \Delta_d=\max_n (x_n)_d-\min_n (x_n)_d\ .
\end{equation}
Since scaling the data set results in scaled differences of the mean and covariance updates, our bounds have to depend on the spread.
Furthermore, we define the overall responsibility of the $k$-th component by $r_k:=\sum_{n=1}^Np_{nk}$.
We measure the difference between the weight updates in terms of $E[w_k^{SEM}] = w_k^{EM} = \frac{r_k}{N}$.

Since the means are not translation invariant with respect to $X$, we measure the differences of the mean updates in terms of the standard deviation of a suitable chosen random variable. 
Consider the difference of the means $\mu_k^{SEM}-\mu_k^{EM} =  \frac{\sum_{k=1}^K z_{nk} (x_n - \mu_k^{EM})}{\sum_{n=1}^N z_{nk}}$.
Ignoring the normalization factor, one summand of the numerator is a random variable that corresponds to the contribution of $x_n$ to the difference. 
The variance of this contribution in the $d$-th coordinate is given by $p_{nk}(1-p_{nk})\left(x_n-\mu^{EM}_k\right)_d^2$.
Since the $z_{nk}$ are independent random variables, the variance of the overall contribution is given by $\sum_{n=1}^N p_{nk}(1-p_{nk})\left(x_n-\mu^{EM}_k\right)_d^2$.
Thus, we measure the difference between $\mu_k^{SEM}$ and $\mu_k^{EM}$ in the $d$-th coordinate in terms of the corresponding standard deviation
\begin{equation}\label{eq:taudef}
\textstyle \tau_{kd}=\sqrt{\sum_{n=1}^N p_{nk}(1-p_{nk})\left(x_n-\mu^{EM}_k\right)_d^2}\ .
\end{equation}

Analogously, for the difference in the $(i,j)$-th entry of the $k$-th covariance update we set \(\Upsilon_{kn}=(x_n-\mu_k^{EM})(x_n-\mu_k^{EM})^T\) and use
\begin{equation}\label{eq:rhodef}
\textstyle \rho_{kij}=\sqrt{\sum_{n=1}^Np_{nk}(1-p_{nk})\left(\Upsilon_{kn}-\Sigma_k^{EM}\right)_{ij}^2}\ .
\end{equation}


To bound the difference between the update formulas, we state three separate theorems.
For a fixed probability $\delta$, we give bounds on the proximity of the updates that can be guaranteed with probability at least $1-\delta$.
To derive a result for the complete update, one has to combine all three theorems using the union bound.
Recall, due to translation invariance, Thm.~\ref{thm:weights} measures the differences in terms of the expected value $E[w_k^{SEM}] = \frac{r_k}{N}$, while  Thm.~\ref{thm:means} and Thm.~\ref{thm:covariances} use the standard deviations $\tau_{ki}$ and $\rho_{kij}$.
The first theorem bounds the difference of the weight updates for one component.

\begin{theorem}[Proximity of weights]\label{thm:weights}
Let  $k\in\{1,\ldots,K\}$ and $2e^{-\nicefrac{r_k}{3}}\leq\delta\leq1$.
Then, for $\lambda_w=\sqrt{\frac{3\ln\nicefrac{2}{\delta}}{r_k}}$ with probability at least $1-\delta$ it holds 
\[ \left|w_k^{SEM} - w_k^{EM} \right| \leq \lambda_w w_k^{EM}\ .\]
\end{theorem}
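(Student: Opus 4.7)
The plan is to recognize the sum $\sum_{n=1}^N z_{nk}$ as a sum of independent Bernoulli random variables and apply a multiplicative Chernoff bound to its deviation from the mean $r_k$.

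First I would rewrite the quantity of interest in a convenient form. Dividing by $N$, the inequality $|w_k^{SEM}-w_k^{EM}| \leq \lambda_w w_k^{EM}$ is equivalent to
\[
\textstyle \Big|\sum_{n=1}^N z_{nk} - r_k\Big| \leq \lambda_w\, r_k,
\]
so it suffices to control the deviation of $S_k := \sum_{n=1}^N z_{nk}$ from its mean. By the SEM sampling rule described in Sec.~\ref{sem_gmm}, each row of $Z$ is sampled independently, and within row $n$ we have $\Pr[z_{nk}=1] = p_{nk}$ (cf.\ Eq.~\ref{sem_gmm:pnk}). Hence the indicators $z_{1k},\dots,z_{Nk}$ are independent Bernoulli variables with $\E[S_k] = \sum_n p_{nk} = r_k$.

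Second, I would invoke the standard multiplicative Chernoff bound: for independent $\{0,1\}$ random variables with $\E[S_k]=r_k$ and any $\varepsilon\in(0,1]$,
\[
\textstyle \Pr\big[\,|S_k - r_k| \geq \varepsilon\, r_k\,\big] \leq 2\exp\!\big(-\varepsilon^2 r_k / 3\big).
\]
Setting the right-hand side equal to $\delta$ and solving for $\varepsilon$ gives exactly $\varepsilon = \sqrt{3\ln(2/\delta)/r_k} = \lambda_w$, which is the quantity appearing in the statement.

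Finally, I would verify that the hypothesis $\delta \geq 2e^{-r_k/3}$ ensures the Chernoff bound is applicable, i.e.\ $\lambda_w\in(0,1]$. Indeed, $\lambda_w \leq 1$ is equivalent to $3\ln(2/\delta) \leq r_k$, which rearranges to $\delta \geq 2e^{-r_k/3}$. There is no real obstacle here: the only thing to be careful about is (i) justifying the independence of the $z_{nk}$ across $n$ (a direct consequence of the fact that $p(Z|X,\theta)$ factorises over the observations, since both $X$ and $\theta$ are fixed in the current round), and (ii) matching the condition on $\delta$ to the validity range of the multiplicative Chernoff bound, as above.
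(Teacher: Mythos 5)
Your proposal is correct and follows essentially the same route as the paper: reduce to the sum $\sum_n z_{nk}$ with mean $r_k$, apply the multiplicative Chernoff bound $\Pr[|Y-\E[Y]|\geq\lambda\E[Y]]\leq 2e^{-\E[Y]\lambda^2/3}$, and use the hypothesis on $\delta$ to ensure $\lambda_w\leq 1$ (this is exactly the paper's Lemma~\ref{lem:wgt_chernoff} and its corollary Lemma~\ref{lem:wgt_delta}). The only nitpick is that passing from $|w_k^{SEM}-w_k^{EM}|\leq\lambda_w w_k^{EM}$ to $|\sum_n z_{nk}-r_k|\leq\lambda_w r_k$ is a multiplication by $N$, not a division.
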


Hence, given $\delta$, we know that with probability at least $1-\delta$ the proximity of the weights is $E[w_k^{SEM}]$ times a factor inverse proportional to $\sqrt{r_k}$ and poly-logarithmic in $\delta^{-1}$. 
In particular, the dependency on $\delta^{-1}$ is mild, and, as one would expect, the accuracy of our bound improves with growing $r_k$ due to the law of large numbers.
The following two theorems state similar results for the means and covariances.

The second theorem bounds the difference in a single coordinate of the mean updates.
Since one can not expect good estimates for the means if the weights are not well approximated, it depends on the proximity $\lambda_w$.

\begin{theorem}[Proximity of means]\label{thm:means}
Let $k\in\{1,\ldots,K\}$ and denote by $E_w$ the event that
\begin{equation}\label{thm:means:assumption} 
\textstyle \left|w_k^{SEM} - w_k^{EM} \right| \leq \lambda_w w_k^{EM}
\end{equation}
where $0<\lambda_w<1$. Then, for $i\in\{1,\ldots,D\}$ and $0<\delta<1$
\[ p\left( \left|\left(\mu_k^{SEM}-\mu_k^{EM}\right)_i\right|\leq\frac{\lambda_\mu}{1-\lambda_w}\cdot\frac{\tau_{k i}}{r_k}\ \bigg\vert\ E_w \right) \geq 1-\delta \]
for
\begin{equation}\nonumber
\lambda_\mu=\begin{cases}
\sqrt{2\e\ln\nicefrac{2}{\delta}}&\mbox{if }\frac{\tau_{ki}}{\Delta_i}\geq\frac{1}{\e}\sqrt{2\e\ln\nicefrac{2}{\delta}} \\[2ex]
\frac{2\Delta_i}{\tau_{ki}}\ln\nicefrac{2}{\delta}&\mbox{otherwise }
\end{cases}\ .
\end{equation}
\end{theorem}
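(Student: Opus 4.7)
The plan is to reduce the claim to a concentration inequality for a sum of independent, bounded, mean-zero random variables and then apply a Bennett-type tail bound.

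First I would rewrite the coordinate-wise difference as
\begin{equation*}
\left(\mu_k^{SEM}-\mu_k^{EM}\right)_i = \frac{\sum_{n=1}^N z_{nk}(x_n-\mu_k^{EM})_i}{\sum_{n=1}^N z_{nk}},
\end{equation*}
and note that, by the defining identity of $\mu_k^{EM}$, one has $\sum_{n=1}^N p_{nk}(x_n-\mu_k^{EM})_i=0$. Setting $\tilde Y_n=(z_{nk}-p_{nk})(x_n-\mu_k^{EM})_i$, the numerator equals $\sum_n \tilde Y_n$. Since for fixed $k$ the Bernoullis $z_{nk}$ are independent across $n$, the $\tilde Y_n$ are independent and mean zero, $|\tilde Y_n|\leq\Delta_i$ because $\mu_k^{EM}$ lies in the coordinate-wise hull of the $x_n$, and the variances sum to exactly $\tau_{ki}^2$ by the definition in (\ref{eq:taudef}).

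Next I would apply a Bennett-style tail bound for such sums. It provides a sub-Gaussian tail of the form $2\exp(-t^2/(2e\tau_{ki}^2))$ for deviations $t\leq e\tau_{ki}^2/\Delta_i$ and a sub-exponential tail $2\exp(-t/(2\Delta_i))$ beyond that threshold. Inverting the two bounds at confidence $\delta$ yields $t=\tau_{ki}\sqrt{2e\ln(2/\delta)}$ in the first regime and $t=2\Delta_i\ln(2/\delta)$ in the second. The regime-transition condition $t\leq e\tau_{ki}^2/\Delta_i$ translates directly into $\tau_{ki}/\Delta_i\geq(1/e)\sqrt{2e\ln(2/\delta)}$, reproducing exactly the case distinction in the definition of $\lambda_\mu$. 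In either case $\left|\sum_n \tilde Y_n\right|\leq\lambda_\mu\tau_{ki}$ with probability at least $1-\delta$.

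Finally I would use $E_w$ to bound the random denominator: on $E_w$ we have $\sum_n z_{nk}=Nw_k^{SEM}\geq Nw_k^{EM}(1-\lambda_w)=(1-\lambda_w)r_k$, and dividing the numerator bound by this lower bound gives the claimed $\lambda_\mu\tau_{ki}/((1-\lambda_w)r_k)$.

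The main obstacle I anticipate is selecting the precise Bennett variant whose constants reproduce both the $e$-dependent threshold and the two prefactors in $\lambda_\mu$; a direct Bernstein bound yields the right qualitative two-regime behaviour but with slightly different numerical constants and a unified formula rather than a case split. A secondary subtlety is that $E_w$ and the numerator sum depend on the same $z_{nk}$, so conditioning on $E_w$ is not strictly free; one either absorbs a factor $\Pr(E_w)^{-1}$ into $\delta$ (negligible when $E_w$ holds with high probability) or, in line with the paper's plan to combine Theorems~\ref{thm:weights}--\ref{thm:covariances} by a union bound, reads the statement as an unconditional bound on the intersection of the two events.
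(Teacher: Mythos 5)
Your proposal follows essentially the same route as the paper: the identical centering $(z_{nk}-p_{nk})(x_n-\mu_k^{EM})_i$ with the observation that the $p_{nk}$-weighted sum vanishes, the bound $|\cdot|\leq\Delta_i$ and variance $\tau_{ki}^2$, a two-regime Bennett/Chernoff-type tail bound (which the paper supplies as Lemmas~\ref{lem:dev_chernoff} and~\ref{lem:dev_delta}, with exactly the constants and threshold you predicted), and the lower bound $\sum_n z_{nk}\geq(1-\lambda_w)r_k$ on $E_w$. Your closing remark about the conditioning on $E_w$ not being free is a legitimate subtlety that the paper's own proof silently elides, and your proposed resolution (reading the result as a bound on the intersection of events, combined later by a union bound) is the natural fix.
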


If the probability of success is small enough, i.e. $\delta \geq 2e^{-(e\tau_{ki}^2)/(2\Delta_i^2)}$, we can use the first case and obtain a better proximity.
The third theorem is the equivalent of Thm.~\ref{thm:means} for a single entry of the covariance matrix.
The bound is the sum of two terms, the analogon of the bound for the means from Thm.~\ref{thm:means} and
the error that arises from the accuracy of the mean estimates.

\begin{theorem}[Proximity of covariances]\label{thm:covariances}
Let $k\in\{1,\ldots,K\}$, $i,j\in\{1,\ldots,D\}$ and denote by $E_{w,\lambda}$ the event that
\begin{equation}  \label{thm:covariances:assumption1}
\textstyle\left|w_k^{SEM} - w_k^{EM} \right| \leq \lambda_w w_k^{EM}
\end{equation}
where $0<\lambda_w<1$ and that for $\ell=i,j$ it holds
\begin{equation} \label{thm:covariances:assumption2}
\textstyle \left|\left(\mu_k^{SEM}-\mu_k^{EM}\right)_\ell\right|\leq\frac{\lambda_{\mu \ell}}{1-\lambda_w}\cdot\frac{\tau_{k\ell}}{r_k}
\end{equation}
where $\lambda_{\mu \ell}>0$. 
Then, for $0<\delta<1$
\begin{align*}
  p\bigg(  &\left|\left(\Sigma_k^{SEM}-\Sigma_k^{EM}\right)_{ij}\right| \\ 
&\leq\frac{\lambda_\Sigma}{1-\lambda_w}\cdot\frac{\rho_{kij}}{r_k}+\frac{\lambda_{\mu i}\lambda_{\mu j}}{(1-\lambda_w)^2}\cdot\frac{\tau_{ki}\tau_{kj}}{r_k^2} \bigg\vert E_{w,\lambda} \bigg) \geq 1-\delta 
\end{align*}
for
\begin{equation}\nonumber
\lambda_\Sigma=\begin{cases}
\sqrt{2\e\ln\nicefrac{2}{\delta}}&\mbox{if }\frac{\rho_{kij}}{\Delta_i\Delta_j}\geq\frac{1}{\e}\sqrt{2\e\ln\nicefrac{2}{\delta}}\\[2ex]
\frac{2\Delta_i\Delta_j}{\rho_{kij}}\ln\nicefrac{2}{\delta}&\mbox{otherwise }
\end{cases}\ .
\end{equation}
\end{theorem}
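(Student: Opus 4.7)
The plan is to mirror the structure of the proof of Theorem~\ref{thm:means} while isolating the two error sources that enter the covariance update, namely the replacement of $p_{nk}$ by the sampled $z_{nk}$, and the replacement of the centre $\mu_k^{EM}$ by $\mu_k^{SEM}$. The first step is the weighted-translation identity
\[
\Sigma_k^{SEM} = \frac{\sum_n z_{nk}\Upsilon_{kn}}{\sum_n z_{nk}} - (\mu_k^{SEM}-\mu_k^{EM})(\mu_k^{SEM}-\mu_k^{EM})^T,
\]
which holds because $\mu_k^{SEM}$ is the $z_{nk}$-weighted mean of the $x_n$. Subtracting $\Sigma_k^{EM}$ entry-wise and using $\sum_n p_{nk}\bigl((\Upsilon_{kn})_{ij}-(\Sigma_k^{EM})_{ij}\bigr)=0$ to centre the numerator, I would obtain the decomposition
\[
(\Sigma_k^{SEM} - \Sigma_k^{EM})_{ij} = \frac{\sum_n (z_{nk}-p_{nk})\bigl((\Upsilon_{kn})_{ij}-(\Sigma_k^{EM})_{ij}\bigr)}{\sum_n z_{nk}} \;-\; (\mu_k^{SEM}-\mu_k^{EM})_i(\mu_k^{SEM}-\mu_k^{EM})_j.
\]

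The second (mean-correction) term is controlled deterministically on $E_{w,\lambda}$ by hypothesis~(\ref{thm:covariances:assumption2}): multiplying the two coordinate bounds yields exactly the $\frac{\lambda_{\mu i}\lambda_{\mu j}}{(1-\lambda_w)^2}\cdot\frac{\tau_{ki}\tau_{kj}}{r_k^2}$ summand appearing in the conclusion. For the first (random) term, I would apply a Bennett-type tail bound to $T := \sum_n (z_{nk}-p_{nk})\bigl((\Upsilon_{kn})_{ij}-(\Sigma_k^{EM})_{ij}\bigr)$. The summands are zero-mean and independent across $n$ (since $q(Z)=\prod_n p(z_n\mid x_n,\theta)$ by observation independence), their total variance equals $\rho_{kij}^2$ by definition~(\ref{eq:rhodef}), and each summand is bounded in magnitude by a constant multiple of $\Delta_i\Delta_j$ because $|(x_n-\mu_k^{EM})_\ell|\leq\Delta_\ell$ implies $|(\Upsilon_{kn})_{ij}|\leq\Delta_i\Delta_j$ and hence also $|(\Sigma_k^{EM})_{ij}|\leq\Delta_i\Delta_j$. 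Bennett then delivers $|T|\leq\lambda_\Sigma\rho_{kij}$ with probability at least $1-\delta$, the two cases of $\lambda_\Sigma$ corresponding to the variance- and range-dominated regimes of the bound. Event $E_{w,\lambda}$ further guarantees $\sum_n z_{nk}=Nw_k^{SEM}\geq(1-\lambda_w)r_k$, so dividing by the denominator costs only the factor $1/(1-\lambda_w)$; the triangle inequality between the two pieces then produces the stated bound.

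The main obstacle will be matching the piecewise constants in $\lambda_\Sigma$ \emph{exactly}. This reduces to re-running the calculation used for $\lambda_\mu$ in Theorem~\ref{thm:means}, with the role of the ``observation'' $(x_n-\mu_k^{EM})_i$ played by the centred outer-product entry $(\Upsilon_{kn})_{ij}-(\Sigma_k^{EM})_{ij}$ and the coordinate spread $\Delta_i$ replaced by the product $\Delta_i\Delta_j$; the algebraic structure is otherwise identical, so the threshold $\rho_{kij}/(\Delta_i\Delta_j)=\frac{1}{\e}\sqrt{2\e\ln(2/\delta)}$ and the leading constants $\sqrt{2\e\ln(2/\delta)}$ and $2\ln(2/\delta)$ emerge from the same Bennett calibration.
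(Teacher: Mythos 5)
Your proposal matches the paper's proof essentially step for step: the same identity $\Sigma_k^{SEM}+\nu\nu^T=\frac{\sum_n z_{nk}\Upsilon_{kn}}{\sum_n z_{nk}}$ with $\nu=\mu_k^{SEM}-\mu_k^{EM}$, the same centred summands $(z_{nk}-p_{nk})\left(\Upsilon_{kn}-\Sigma_k^{EM}\right)_{ij}$ with total variance $\rho_{kij}^2$, the same Bennett-type concentration bound (the paper's Lemma~\ref{lem:dev_delta} applied with $C=\Delta_i\Delta_j$, which yields exactly the two-case $\lambda_\Sigma$), and the same use of the conditioning event plus the triangle inequality to split off the deterministic $\nu_i\nu_j$ term. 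This is correct and is the paper's argument.
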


\subsection{Remarks}

\paragraph{Limitations.}
Generally speaking, it is not possible to approximate a mixture component with too small a weight.
Thus, we cannot expect bounds that do not use assumptions on the weights, which correspond to the $r_k = N\cdot w^{EM}_k$.
Since Thm.~\ref{thm:means} and Thm.~\ref{thm:covariances} both depend on $\frac{1}{1-\lambda_w}$, 
they become arbitrarily large for $\lambda_w$ near 1.
For instance, to ensure $\lambda_w\leq \frac{1}{2}$, according to Thm.~\ref{thm:weights} all weights have to be at least $\frac{12 \cdot \ln(\nicefrac{2}{\delta})}{N}$. 
Since usually $N\gg K$ and since the dependence on $\delta^{-1}$ is logarithmic, this is a rather mild assumption.

\paragraph{Running Time.}\label{runtime}
While in the EM algorithm each data point is involved in the update of all components, the SEM algorithm uses each data point only for the update of exactly one component.
Thus, depending on its concrete instantiation, the speedup between the SEM algorithm and the EM algorithm might be up to a factor of $K$.
Regarding the parameter estimation problem of Gaussian mixture models, we counted the number of multiplications\footnote{The number of additions and the number of multiplications are dominated by the same term, while the number of divisions is much smaller. 
Furthermore, we assume that $N\gg D$.}
in $\bbbr$ during a single iteration of both algorithms.
The overall running time of the EM algorithm is dominated by $2 K N D^2$, while for the SEM algorithm it is dominated by $K N D^2$.
Thus, there is at least a factor 2 speedup which is confirmed in our experiments (cf. Sec.~\ref{evaluation}).

\subsection{Proof of the proximity bounds}

To bound the difference of the weights, we use the following Chernoff bound.
A proof can be found e.g. in \cite{mitzenmacher05}.
\begin{lemma}\label{lem:wgt_chernoff}
Let $X_1,\ldots,X_n$ be independent random variables in $\{0,1\}$ and let $Y=\sum_{i=1}^n X_i$.
Then, for each $0\leq\lambda\leq1$ we have
\[\p\left(|Y-\E[Y]|\geq\lambda\cdot\E[Y]\right)\leq2e^{-\E[Y]\frac{\lambda^2}{3}}.\]
\end{lemma}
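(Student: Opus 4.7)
The plan is to apply the standard moment generating function (MGF) / Markov's inequality technique (often called the Chernoff method). Let $p_i = \p(X_i = 1)$ and $\mu = \E[Y] = \sum_{i=1}^n p_i$. For any real $t$, independence of the $X_i$ yields $\E[e^{tY}] = \prod_{i=1}^n \E[e^{tX_i}] = \prod_{i=1}^n \bigl(1 + p_i(e^t - 1)\bigr)$, and the elementary inequality $1+x \leq e^x$ gives the clean bound $\E[e^{tY}] \leq \exp\bigl(\mu(e^t - 1)\bigr)$.

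For the upper tail, I would apply Markov's inequality to $e^{tY}$ for $t > 0$: $\p(Y \geq (1+\lambda)\mu) \leq e^{-t(1+\lambda)\mu}\,\E[e^{tY}]$. Optimizing over $t$ by setting $t = \ln(1+\lambda) > 0$ collapses this to $\p(Y \geq (1+\lambda)\mu) \leq \bigl(e^\lambda / (1+\lambda)^{1+\lambda}\bigr)^\mu$. Symmetrically, for the lower tail I would apply Markov's inequality to $e^{-tY}$ with $t = -\ln(1-\lambda) > 0$, obtaining $\p(Y \leq (1-\lambda)\mu) \leq \bigl(e^{-\lambda} / (1-\lambda)^{1-\lambda}\bigr)^\mu$.

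The remaining step is to reduce both raw bounds to the uniform form $e^{-\mu \lambda^2 / 3}$. This reduces to the two analytic inequalities $(1+\lambda)\ln(1+\lambda) - \lambda \geq \lambda^2/3$ and $(1-\lambda)\ln(1-\lambda) + \lambda \geq \lambda^2/3$ on $[0,1]$. Both follow by expanding the left-hand sides as power series in $\lambda$ and checking that the partial sums dominate $\lambda^2/3$ term by term on $[0,1]$ (equivalently, by differentiating twice and verifying non-negativity on the interval). Finally, I would combine the two one-sided bounds by the union bound to obtain $\p(|Y - \mu| \geq \lambda\mu) \leq 2 e^{-\mu\lambda^2/3}$, which is exactly the claim.

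The main obstacle is the analytic inequality $e^\lambda / (1+\lambda)^{1+\lambda} \leq e^{-\lambda^2/3}$ (and its lower-tail counterpart): this is where the gap between the clean MGF bound and the usable Chernoff form is absorbed, and the range restriction $0 \leq \lambda \leq 1$ is essential — outside this range the constant $1/3$ in the exponent no longer suffices and a different bound must be used. The rest of the argument (MGF factorization, Markov, optimization over $t$, union bound) is mechanical.
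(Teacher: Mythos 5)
Your proposal is correct and is exactly the standard textbook argument (MGF factorization, Markov's inequality, optimization of $t$, reduction to the inequalities $(1+\lambda)\ln(1+\lambda)-\lambda\geq\lambda^2/3$ and $(1-\lambda)\ln(1-\lambda)+\lambda\geq\lambda^2/3$ on $[0,1]$, then a union bound) that the paper itself does not reproduce but instead cites from the literature. One tiny caveat: for the upper-tail inequality the second derivative of $(1+\lambda)\ln(1+\lambda)-\lambda-\lambda^2/3$ changes sign at $\lambda=\nicefrac{1}{2}$, so the ``differentiate twice and check non-negativity'' shortcut needs the extra observation that the first derivative stays non-negative up to $\lambda=1$ (since $\ln 2>\nicefrac{2}{3}$); your primary power-series route is fine as stated.
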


As a corollary we get the following lemma that bounds the difference for a given probability of occurrence.
\begin{lemma}\label{lem:wgt_delta}
Let $X_1,\ldots,X_n$ be independent random variables in $\{0,1\}$ and let $Y=\sum_{i=1}^n X_i$.
Then, for $2e^{-\frac{\E[Y]}{3}}\leq\delta\leq1$ and $\lambda=\sqrt{\frac{3\ln\nicefrac{2}{\delta}}{\E[Y]}}$ we have
\[\p\left(|Y-\E[Y]|\geq\lambda\cdot\E[Y]\right)\leq\delta.\]
\end{lemma}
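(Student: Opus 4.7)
The plan is to derive this as a direct specialization of Lemma~\ref{lem:wgt_chernoff}: choose $\lambda$ so that the Chernoff tail bound $2e^{-\E[Y]\lambda^2/3}$ evaluates to exactly $\delta$, and then verify that this $\lambda$ still lies in the admissible range $[0,1]$ required by that lemma.

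First I would solve $2e^{-\E[Y]\lambda^2/3}=\delta$ for $\lambda$. Taking logarithms gives $\E[Y]\lambda^2/3=\ln(2/\delta)$, which yields exactly the stated choice $\lambda=\sqrt{3\ln(2/\delta)/\E[Y]}$. With this substitution, the right-hand side of the Chernoff bound becomes $\delta$ by construction, so no further probabilistic work is needed beyond invoking Lemma~\ref{lem:wgt_chernoff}.

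The only real thing to check is admissibility. I need $0\leq\lambda\leq1$. Non-negativity is immediate because $\delta\leq1$ forces $\ln(2/\delta)\geq\ln 2>0$. For the upper bound $\lambda\leq1$, I would use the hypothesis $\delta\geq 2e^{-\E[Y]/3}$: rearranging gives $\ln(2/\delta)\leq\E[Y]/3$, hence $3\ln(2/\delta)/\E[Y]\leq1$, so $\lambda\leq1$. This is really the sole purpose of the assumption $2e^{-\E[Y]/3}\leq\delta$, and it is the step I would highlight as the one piece of the argument that is not pure substitution.

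I do not expect a serious obstacle here; the statement is essentially a repackaging of Lemma~\ref{lem:wgt_chernoff} where $\lambda$ is inverted in terms of the failure probability $\delta$ rather than being treated as a free parameter. The mildly delicate point is the two-sided constraint on $\delta$: the upper bound $\delta\leq 1$ ensures the probability statement is non-vacuous, while the lower bound $\delta\geq 2e^{-\E[Y]/3}$ is precisely what keeps $\lambda$ within the range in which the multiplicative Chernoff bound applies.
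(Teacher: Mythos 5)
Your proof is correct and takes exactly the route the paper intends: the paper presents Lemma~\ref{lem:wgt_delta} as an immediate corollary of Lemma~\ref{lem:wgt_chernoff} obtained by solving $2e^{-\E[Y]\lambda^2/3}=\delta$ for $\lambda$, with the hypothesis $2e^{-\E[Y]/3}\leq\delta$ serving precisely to guarantee $\lambda\leq1$. Your explicit verification of the admissibility range $0\leq\lambda\leq1$ fills in the one detail the paper leaves implicit.
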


Using Lemma~\ref{lem:wgt_delta}, we are able to prove Theorem~\ref{thm:weights}.
\begin{proof}[Proof of Theorem~\ref{thm:weights}]
Let $W=\sum_{n=1}^Nz_{nk}=N\cdot w_k^{SEM}$.
Then, \(\E[W]=\sum_{n=1}^Np_{nk}=r_k=N\cdot w_k^{EM}.\)
Thus, applying Lemma~\ref{lem:wgt_delta} yields
$N\cdot|w_k^{SEM}-w_k^{EM}|\geq N\cdot\lambda w_k^{EM}$ with probability at most $\delta$.
\end{proof}


To bound the difference of the means and covariances, we use the following Chernoff-type bound.
\begin{lemma}\label{lem:dev_chernoff}
Let $X_1,\ldots,X_n$ be discrete, independent random variables with $\E[X_i]=0$ and $|X_i|\leq C$ for some constant $C\geq 0$ and $i=1,\ldots,n$. 
Let $Y=\sum_{n=1}^N X_i$.
Then, for $\lambda\geq0$ it holds
\[\textstyle\p\left(|Y|\geq\lambda\sqrt{\Var(Y)}\right)\leq 2e^{\frac{-\lambda^2}{2e^a}},\]
where $\Var(Y)=\sum_{i=1}^n\Var(X_i)$ and $a\geq0$ such that $\lambda=\frac{ae^a\sqrt{\Var(Y)}}{C}$.
\end{lemma}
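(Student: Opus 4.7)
The plan is to apply the exponential moment method (Chernoff trick), bound the moment generating function of each $X_i$ via a Bernstein-type argument, and then arrange for the free parameter $t$ to match the reparametrization given in the statement, namely $t = a/C$. By independence, for any $t>0$,
\[
\p\bigl(Y \geq \lambda \sqrt{\Var(Y)}\bigr)
\;\leq\; e^{-t\lambda\sqrt{\Var(Y)}}\,\E[e^{tY}]
\;=\; e^{-t\lambda\sqrt{\Var(Y)}} \prod_{i=1}^n \E[e^{tX_i}],
\]
so the core task is to control each factor $\E[e^{tX_i}]$.

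Choosing $t = a/C$ guarantees $tX_i \leq a$ for all $i$, which allows me to exploit an elementary one-sided inequality: for every $y \leq a$,
\[
e^{y} - 1 - y \;\leq\; \tfrac{1}{2}\, y^{2} e^{a}.
\]
I would prove this by two cases. For $0 \leq y \leq a$, the identity $e^y - 1 - y = y^2 \sum_{k\geq 0} y^k/(k+2)!$ yields $e^y - 1 - y \leq y^2 \sum_{k\geq 0} a^k/(k+2)! \leq \tfrac{1}{2} y^2 \sum_{k\geq 0} a^k/k! = \tfrac{1}{2} y^2 e^a$, using $(k+1)(k+2)\geq 2$. For $y\leq 0$, one shows directly (e.g.\ by noting that $g(y)=e^y-1-y-y^2/2$ satisfies $g'(y)=e^y-1-y\geq 0$) that $e^y-1-y \leq y^2/2 \leq \tfrac{1}{2}y^2 e^a$. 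Combined with $\E[X_i]=0$, this gives
\[
\E[e^{tX_i}] \;\leq\; 1 + \tfrac{1}{2} t^2 \Var(X_i)\, e^{a}
\;\leq\; \exp\!\bigl(\tfrac{1}{2} t^2 \Var(X_i)\, e^{a}\bigr).
\]

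Multiplying over $i$ and substituting back yields
\[
\p\bigl(Y \geq \lambda\sqrt{\Var(Y)}\bigr)
\;\leq\; \exp\!\Bigl(-t\lambda\sqrt{\Var(Y)} + \tfrac{1}{2} t^2 \Var(Y)\, e^{a}\Bigr).
\]
Now plug in $t = a/C$ and use the defining relation $\lambda = a e^a \sqrt{\Var(Y)}/C$, equivalently $a\sqrt{\Var(Y)}/C = \lambda\, e^{-a}$. A short computation gives $-t\lambda\sqrt{\Var(Y)} = -\lambda^2 e^{-a}$ and $\tfrac{1}{2} t^2 \Var(Y)\, e^{a} = \tfrac{1}{2}\lambda^2 e^{-a}$, so the exponent collapses to $-\lambda^{2}/(2e^{a})$, matching the claimed bound on the upper tail. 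Finally, applying the identical argument to the variables $-X_i$ (which also satisfy the hypotheses and have the same variance) bounds the lower tail $\p(Y \leq -\lambda\sqrt{\Var(Y)})$ by the same quantity, and a union bound introduces the factor of $2$.

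I expect the main obstacle to be the auxiliary inequality $e^y - 1 - y \leq \tfrac{1}{2} y^2 e^a$ for $y \leq a$: it is the one step that must simultaneously cover $y<0$ (where $a$ plays no role) and $0 \leq y \leq a$ (where it does), and it is this sharp one-sided form that forces the otherwise magical cancellation leading to the exponent $-\lambda^2/(2e^a)$. Everything else is Markov plus bookkeeping.
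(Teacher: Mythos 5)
Your proof is correct and follows essentially the same route as the paper's: Markov's inequality applied to $e^{tY}$, a Bernstein-type bound $\E[e^{tX_i}]\leq 1+\tfrac{e^a}{2}t^2\Var(X_i)$, and the choice $t=a/C$ (which coincides with the paper's choice $t=\lambda/(e^a\sqrt{\Var(Y)})$ under the defining relation for $a$). If anything, your two-case proof of the auxiliary inequality $e^y-1-y\leq\tfrac12 y^2e^a$ for $y\leq a$ is more careful than the paper's series manipulation, which passes through the intermediate bound $e^y-1-y\leq\tfrac12 y^2e^y$ that fails for $y<0$; the final estimate is unaffected, since both arguments ultimately only use the factor $e^a$.
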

\begin{proof}
Due to symmetry, we only prove $\p\left(Y\geq\lambda\sqrt{\Var(Y)}\right) \leq e^{ -\lambda^2/(2e^a) }$.
By Markov's inequality we obtain  
$\p\left(Y\geq\lambda\sqrt{\Var(Y)}\right)\leq \frac{E\left[e^{tY}\right]}{e^{t\lambda\sqrt{\Var(Y)}}}$ for each $t>0$.

Let $x_{ij}$ be the possible outcomes of $X_i$ with $\p(X_i = x_{ij})=p_{ij}$.
Then we obtain
\begin{align*}
\textstyle E\left[e^{tX_i}\right] 
&\textstyle= 1+\sum_{j=1}^m p_{ij} \left( \frac{1}{2!}(tx_{ij})^2 + \frac{1}{3!}(tx_{ij})^3 + \ldots \right)\\
&\textstyle= 1+\sum_{j=1}^m p_{ij} (tx_{ij})^2 \left( \frac{1}{2!} + \frac{1}{3!}(tx_{ij}) + \ldots \right)\\
&\textstyle\leq 1+\sum_{j=1}^m p_{ij} (tx_{ij})^2 \frac{1}{2}e^{tx_{ij}}  \leq 1+\frac{e^a}{2}t^2\Var(X_i)
\end{align*}
for $a \geq t C$.
Using $1+\alpha\leq e^\alpha$ for $\alpha\geq 0$, we get
\begin{align*}
\textstyle E\left[\e^{tY}\right] &\textstyle = \prod_{i=1}^n\E[e^{tX_i}] 
\leq \prod_{i=1}^n (1+\frac{e^a}{2} t^2\Var(X_i)) \\
\textstyle &\leq \prod_{i=1}^n e^{\frac{e^a}{2} t^2 \Var(X_i)}=  e^{\frac{e^a}{2} t^2 \Var(Y)}
\end{align*}
Setting $t=\frac{\lambda}{e^a \sqrt{\Var(Y)}}$ yields the claim.
\end{proof}

\begin{lemma}\label{lem:dev_delta}
Let $X_1,\ldots,X_n$ be discrete, independent random variables with $\E[X_i]=0$ and $|X_i|\leq C$ for some constant $C\geq0$ and $i=1,\ldots,n$. 
Let $Y=\sum_{n=1}^N X_i$ and $0<\delta<1$.
Then,
\[\p\left(|Y|\geq\lambda\sqrt{\Var(Y)}\right)\leq\delta\]
for
\begin{equation}\nonumber
\lambda=\begin{cases}
\sqrt{2\e\ln\nicefrac{2}{\delta}}&\mbox{if }\frac{\sqrt{\Var(Y)}}{C}\geq\frac{1}{\e}\sqrt{2\e\ln\nicefrac{2}{\delta}}\\[2ex]
\frac{2C}{\sqrt{\Var(Y)}}\ln\nicefrac{2}{\delta}&\mbox{otherwise}
\end{cases}
\end{equation}
\end{lemma}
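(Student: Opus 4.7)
The plan is to derive Lemma \ref{lem:dev_delta} as an essentially immediate consequence of Lemma \ref{lem:dev_chernoff} by choosing, in each of the two regimes, a value of $\lambda$ for which the Chernoff-type bound $2e^{-\lambda^2/(2e^a)}$ falls at or below $\delta$. The subtle point is that Lemma \ref{lem:dev_chernoff} does not leave $a$ free: it is coupled to $\lambda$ through $ae^a = \lambda C/\sqrt{\Var(Y)}$. Because $x\mapsto xe^x$ is strictly increasing on $[0,\infty)$, the threshold $\sqrt{\Var(Y)}/C = (1/e)\sqrt{2e\ln(2/\delta)}$ in the statement is precisely the value at which the sub-Gaussian choice $\lambda=\sqrt{2e\ln(2/\delta)}$ produces $a=1$, so the case split in the lemma is really a split on whether $a \leq 1$ or $a \geq 1$.

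First I would treat the sub-Gaussian regime $\sqrt{\Var(Y)}/C \geq (1/e)\sqrt{2e\ln(2/\delta)}$ by setting $\lambda = \sqrt{2e\ln(2/\delta)}$. The hypothesis then gives $ae^a = \lambda C/\sqrt{\Var(Y)} \leq e$, hence $a \leq 1$ by monotonicity, so $e^a \leq e$. Substituting into Lemma \ref{lem:dev_chernoff} yields $2e^{-\lambda^2/(2e^a)} \leq 2e^{-\lambda^2/(2e)} = 2e^{-\ln(2/\delta)} = \delta$, which is exactly the claim.

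In the complementary sub-exponential regime the above $\lambda$ would force $a>1$ and weaken the bound, so I would instead pick $\lambda = (2C/\sqrt{\Var(Y)})\ln(2/\delta)$. This normalization is chosen so that $\lambda\sqrt{\Var(Y)} = 2C\ln(2/\delta)$, and the coupling then gives $ae^a = 2C^2\ln(2/\delta)/\Var(Y) > e$ by the case assumption, hence $a \geq 1$. Using $e^a = \lambda C/(a\sqrt{\Var(Y)})$ to rewrite the exponent produces $\lambda^2/(2e^a) = a\lambda\sqrt{\Var(Y)}/(2C) = a\ln(2/\delta)$, so Lemma \ref{lem:dev_chernoff} delivers the bound $2(\delta/2)^a \leq \delta$, since $a \geq 1$ and $\delta/2 < 1$.

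The only real difficulty is bookkeeping around the implicit equation $ae^a = \lambda C/\sqrt{\Var(Y)}$; once one recognizes that the case threshold is exactly the boundary $a = 1$, the two choices of $\lambda$ are the natural ones, the first saturating the variance term $\sqrt{\Var(Y)}$ and the second saturating the boundedness parameter $C$.
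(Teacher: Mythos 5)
Your proof is correct and follows essentially the same route as the paper's: in each regime you plug the stated $\lambda$ into Lemma~\ref{lem:dev_chernoff}, use the coupling $ae^a=\lambda C/\sqrt{\Var(Y)}$ together with monotonicity of $xe^x$ to place $a$ on the correct side of $1$, and simplify the exponent to $\ln\nicefrac{2}{\delta}$ times a factor at least $1$. Your observation that the case threshold is exactly the boundary $a=1$ is a nice articulation of why the two choices of $\lambda$ are the natural ones, but the argument itself matches the paper's.
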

\begin{proof}
We start with the first case, i.e. $\frac{\sqrt{\Var(Y)}}{C}\geq\frac{1}{\e}\sqrt{2\e\ln\nicefrac{2}{\delta}}$.
Let $\lambda:=\sqrt{2\e\ln\nicefrac{2}{\delta}}$ and choose $0\leq a\leq1$, such that $\lambda=\frac{ae^a\sqrt{\Var(Y)}}{C}$.
Applying Lemma~\ref{lem:dev_chernoff}, we obtain
\begin{align*}
\textstyle \p\left(|Y|\geq\lambda\sqrt{\Var(Y)}\right)
\leq2\exp\left(\frac{-\lambda^2}{2e^a}\right)
=2\exp\left(\frac{-2\e\ln\nicefrac{2}{\delta}}{2e^a}\right)
=2\left(\frac{\delta}{2}\right)^{\frac{\e}{\e^a}} 
\leq \delta.
\end{align*}

It remains to consider the second case, i.e. $\frac{\sqrt{\Var(Y)}}{C}<\frac{1}{\e}\sqrt{2\e\ln\nicefrac{2}{\delta}}$.
We define $\lambda:=\frac{2C}{\sqrt{\Var(Y)}}
\ln\nicefrac{2}{\delta}$ and let $a\in\R$, such that $\lambda=\frac{ae^a\sqrt{\Var(Y)}}{C}$.
It follows
\begin{equation}\label{eq:dev_epsilon}\textstyle
a\e^a=\frac{2C^2}{\Var(Y)}
\ln\nicefrac{2}{\delta}.
\end{equation}
Using $\frac{C}{\sqrt{\Var(Y)}}>\frac{\e}{\sqrt{2\e\ln\nicefrac{2}{\delta}}}$, we deduce
$a\e^a>\frac{2\e^2}{2\e\ln\nicefrac{2}{\delta}}
\ln\nicefrac{2}{\delta}=\e
$
and thus, $a>1$.
Applying Lemma~\ref{lem:dev_chernoff}, we obtain
\begin{align*}
\p\left(|Y|\geq\lambda\sqrt{\Var(Y)}\right)\leq&\textstyle2\exp\left(\frac{-\lambda^2}{2e^a}\right)=2\exp\left(-\frac{4C^2}{\Var(Y)}\left( \ln\nicefrac{2}{\delta}\right)^2\frac{1}{2e^a}\right).
\end{align*}
Using Equation~\eqref{eq:dev_epsilon} and $a>1$ we deduce
\begin{align*}
\textstyle\p\left(|Y|\geq\lambda\sqrt{\Var(Y)}\right)
&\leq\exp\left(-2a\e^a
\ln\nicefrac{2}{\delta}\frac{1}{2e^a}\right)=2\exp\left(-a
\ln\nicefrac{2}{\delta}\right)\\
&\textstyle<\textstyle2\exp\left(-
\ln\nicefrac{2}{\delta}\right)=2\left(\frac{\delta}{2}\right)
=\delta
\end{align*}
\end{proof}

\begin{proof}[Proof of Theorem~\ref{thm:means}]
Let $k\in\{1,\ldots,K\}$, $i\in\{1,\ldots,D\}$ and define the real random variable
\[\textstyle M_{kin}=(z_{nk}-p_{nk})\left(x_n-\mu_k^{EM}\right)_i.\]
Since $\E[z_{nk}]=p_{nk}$, we get that $\E\left[M_{kin}\right]=0$ and
\[\textstyle\Var(M_{kin})=p_{nk}(1-p_{nk})(x_n-\mu^{EM}_k)_i^2\]
Furthermore, since each $\mu_k^{EM}$ is a convex combination of $x_1,\ldots,x_N$, we get
\[\textstyle|M_{kin}|\leq|z_{nk}-p_{nk}|\cdot\left|\left(x_{n}-\mu_{k}^{EM}\right)_i\right|\leq\Delta_i.\]

Note that by definition of $\mu_k^{EM}$ we have
$\sum_{n=1}^N p_{nk}\left(x_n-\mu_k^{EM}\right)_i=0$.
Thus, for the random variable $M_{ki}= \sum_{n=1}^N M_{kin}$ it holds
\[\textstyle M_{ki}= \sum_{n=1}^N z_{nk}\left(x_n-\mu_k^{EM}\right)_i.\]
Furthermore, we get $\E[M_{ki}]=0$ and
\[\textstyle\Var(M_{ki})=\sum_{n=1}^N p_{nk}(1-p_{nk})(x_n-\mu^{EM}_k)_i^2=\tau^2_{ki}.\]

Applying Lemma \ref{lem:dev_delta} with $C=\Delta_i$ and the appropriate choice of $\lambda_\mu$ yields
\[\textstyle\p\left(\left|\sum_{n=1}^Nz_{nk}\left(x_n-\mu_k^{EM}\right)_i\right|\geq\lambda_\mu\cdot\tau_{ki}\right)\leq\delta.\]

Using Assumption \eqref{thm:means:assumption}, we conclude that
\begin{align*}
\textstyle\left|\left(\mu_k^{SEM}-\mu_k^{EM}\right)_i\right|
&\textstyle=\left|\frac{\left(\sum_{n=1}^Nz_{nk}x_n-\sum_{n=1}^Nz_{nk}\mu_k^{EM}\right)_i}{\sum_{n=1}^Nz_{nk}}\right|\\
&\textstyle=\left|\frac{\sum_{n=1}^Nz_{nk}\left(x_n-\mu_k^{EM}\right)_i}{N\cdot w_k^{SEM}}\right|
\leq\frac{\lambda_\mu\tau_{ki}}{N\cdot(1-\lambda_w)w_k^{EM}}=\frac{\lambda_\mu\tau_{ki}}{(1-\lambda_w)r_k}
\end{align*}
with probability at least $1-\delta$.
\end{proof}


\begin{proof}[Proof of Theorem~\ref{thm:covariances}]
Let $\nu=\mu_{k}^{SEM}-\mu_{k}^{EM}$.
By using the update formulas of the means, we get
\[ \textstyle \Sigma_k^{SEM}+\nu\nu^T=\frac{\sum_{n=1}^Nz_{nk}(x_n-\mu_k^{EM})(x_n-\mu_k^{EM})^T}{\sum_{n=1}^Nz_{nk}}.\]

Analogously to the proof of Theorem~\ref{thm:means}, we can bound the distance between $\Sigma_k^{SEM}+\nu\nu^T$ and $\Sigma_k^{EM}$.
To this end, we define a real random variable 
\[\textstyle S_{kijn}=(z_{nk}-p_{nk})\left(\Upsilon_{kn}-\Sigma_k^{EM}\right)_{ij}\]
where $\Upsilon_{kn} = (x_n - \mu_k^{EM})(x_n-\mu_k^{EM})^T$.
Again, using the definitions, we obtain $\E[S_{kijn}]=0$, $\Var(S_{kijn})=p_{nk}(1-p_{nk})\left(\Upsilon_{kn}-\Sigma_k^{EM}\right)_{ij}^2$, and $|S_{kijn}|\leq \Delta_i\Delta_j$.

Then, for the sum $S_{kij}=\sum_{n=1}^N S_{kijn}$ it follows analogously to the means that
\[ \textstyle S_{kij} =  \sum_{n=1}^N z_{nk}\left(\Upsilon_{kn}-\Sigma_k^{EM}\right)_{ij}.\]
Moreover, we get $\E[S_{kij}]=0$ and $\Var(S_{kij}) = \rho^2_{kij}$.

Applying Lemma \ref{lem:dev_delta} with $C=\Delta_i\Delta_j$ and the appropriate choice of $\lambda_\Sigma$ yields
\[\textstyle\p\left(\left|\sum_{n=1}^N z_{nk}\left(\Upsilon_{kn}-\Sigma_k^{EM}\right)_{ij}\right|\geq\lambda_\Sigma\cdot\rho_{kij}\right)\leq\delta.\]

Furthermore, using the triangle inequality, we get
\begin{align*}
&\textstyle\left|\left(\Sigma_k^{SEM}-\Sigma_k^{EM}\right)_{ij}\right|
=\left|\left(\frac{\sum_{n=1}^Nz_{nk}\Upsilon_{kn}}{\sum_{n=1}^Nz_{nk}}-\nu\nu^T-\Sigma_k^{EM}\right)_{ij}\right|\\
\leq&\textstyle\left|\left(\frac{\sum_{n=1}^Nz_{nk}\left(\Upsilon_{kn}-\Sigma_k^{EM}\right)}{\sum_{n=1}^Nz_{nk}}\right)_{ij}\right|
+\left|\left(\mu_{k}^{SEM}-\mu_k^{EM}\right)_i\right|\cdot\left|\left(\mu_k^{SEM}-\mu_k^{EM}\right)_j\right|\\
\end{align*}

Using Assumptions \eqref{thm:covariances:assumption1} and \eqref{thm:covariances:assumption2}, we obtain that with probability at least $1-\delta$ it holds
\[\textstyle\left|\left(\Sigma_k^{SEM}-\Sigma_k^{EM}\right)_{ij}\right|\leq\frac{\lambda_\Sigma}{(1-\lambda_w)}\cdot\frac{\rho_{kij}}{r_k}+\frac{\lambda_{\mu i}\lambda_{\mu j}}{(1-\lambda_w)^2}\cdot\frac{\tau_{ki}\tau_{kj}}{r_k^2}.\]
\end{proof}

\section{Evaluation}\label{evaluation}
To underpin our proximity and running time analysis, we compare the computations of the classical EM and the SEM algorithm on different input sets. 

\subsection{Implementation and Data Sets}

\paragraph{Implementation.}
To get comparable results, both algorithms were implemented from scratch in C++ using the linear algebra library Eigen. 
Furthermore, as mentioned at the end of Sec.~\ref{sem_gmm}, it is possible that the number of points sampled to a component is too small to compute a new covariance.
For empty components we solve the problem by sampling a new mean and computing the covariance matrix from scratch as in \cite{dasgupta07}. 
In case of too few points, we try to mix the under-determined covariance with the previous covariance matrix or 
simply keep the old covariance matrix.
For the EM algorithm we implemented a similar error handling since it has to deal with similar problems.
In our experiments these problems hardly ever occurred.

\paragraph{Data Sets}
We used artificial as well as real world input data.
For the generation of the artificial data sets, we considered different combinations of the dimension $D\in\bbbn$ and the number of component distributions $K\in\bbbn$.
For each combination, we probabilistically computed several parameter vectors $\theta$.
Then, for each $\theta$ and several $N\in\bbbn$ we drew $N$ points from the Gaussian mixture given by $\theta$.
To get reasonable results, it is important that the Gaussians are not pairwise well separated.
Otherwise, the task of learning the parameters is too easy.
Therefore, we ensured that the mixtures mainly consist of interfusing Gaussians.
Furthermore, we created each mixture with balanced as well as unbalanced weights.

\begin{figure}[t]
\centering
	\includegraphics[width=.5\columnwidth]{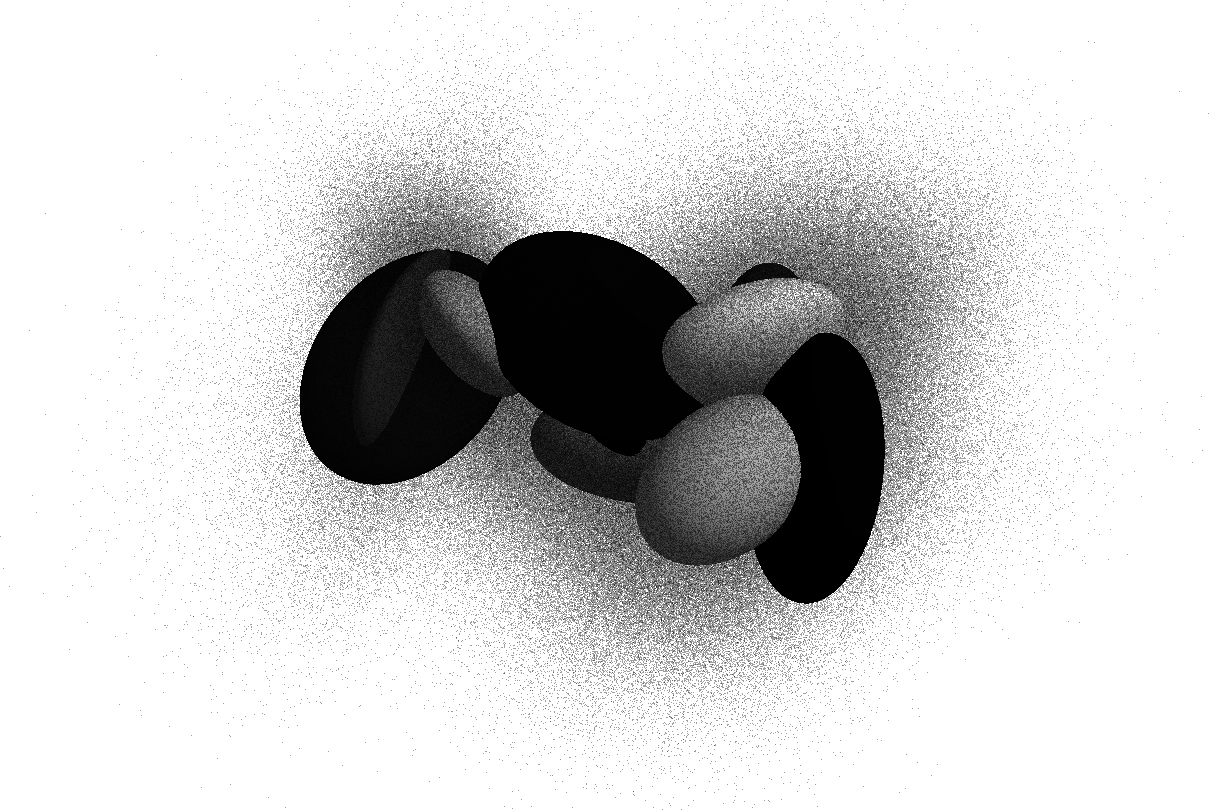}
\caption{Random projection of the artificial data set with $N=1\,000\,000$. The ellipsoids depict the standard deviation areas of each component distribution (the larger the weight of the component, the brighter the shade of the corresponding ellipsoid).}
\label{art_screenshot}
\end{figure} 

\begin{figure}[t]
\centering
\subfloat[Forest Covertype data set]{
		\label{fig:forest_screenshot}		
		\includegraphics[width=.35\columnwidth]{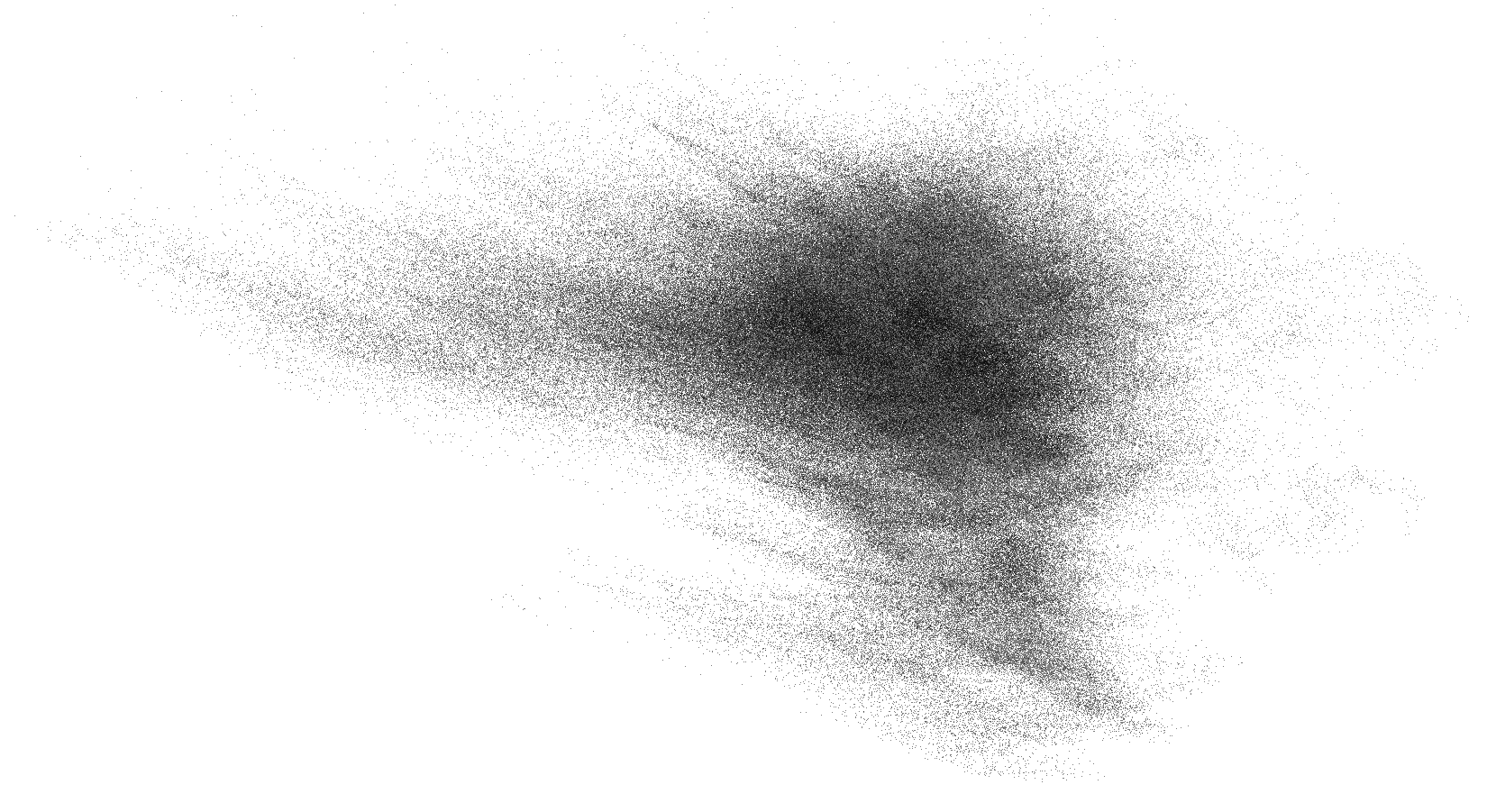}
}
\subfloat[ALOI features data set]{
		\label{fig:aloi_screenshot}
		\includegraphics[width=.35\columnwidth]{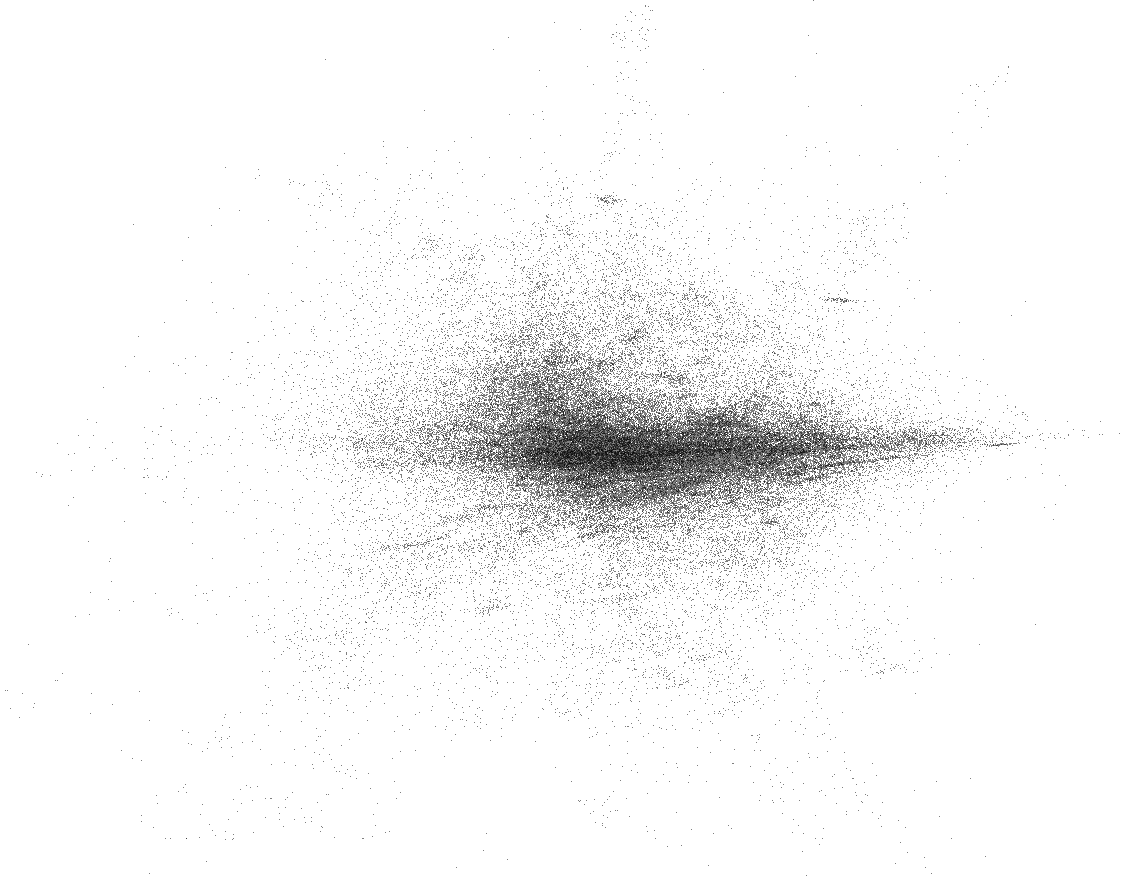}
}
\caption{Random projections of the real world data sets.}
\label{real_screenshot}
\end{figure}


As real world data we use two publicly available data sets. 
The first one is the \emph{Forest Covertype data set} which is part of the \emph{UCI Machine Learning Library}\cite{asuncion07} and contains 581\,012 data points.
To get data suitable for Gaussian mixture parameter estimation, we ignored the class labels and 44 qualitative binary attributes.
The second data set is based on the \emph{Amsterdam Library of Object Images (ALOI)}\cite{geusebroek05,kriegel12}.
This database consists of 110\,250 images of 1\,000 small objects, taken under various conditions.
We use a 27-dimensional feature vector set that is based on color histograms in HSV color space.
The features were extracted from the database as described in \cite{kriegel11} but using 2 bins for hue, saturation and brightness each.
Both real world data sets were normalized before use.
That is, for each coordinate of the input data points the values were translated and scaled to fit into the interval $[0,1]$. 
Otherwise, the difference between the means and covariances might be dominated by few dimensions of the input space, and the problem might reduce to a lower dimensional problem.

\subsection{Experiments}\label{subsec:experiments}

For the comparison of the EM and SEM algorithm, we established three different types of tests.
In the first type of tests we compared the negative log-likelihood of the parameters computed by the algorithms.
The goal of the second type of tests was to compare the intermediate solutions of the two algorithms during their execution.
To do this, we ran both algorithms for 50 rounds and computed the differences between the solutions after each round.

The aim of the third type of tests was to evaluate our theoretical bounds. 
For the sake of simplicity, we only compared the means computed by the two algorithms.
In each round of the SEM algorithm we also computed the mean updates as the EM algorithm would compute it given the current model parameters.
Then, we determined the actual Euclidean distance between each pair of means and our theoretical bounds. 
To get a bound that holds with high probability, we chose $\delta=\frac{1}{100\cdot K(D+1)}$.
Applying Them.~\ref{thm:weights} for all $K$ weight updates and Thm.~\ref{thm:means} for all $D$ coordinates in all $K$ mean updates yields $K(D+1)$ bounds that hold with probability $1-\delta$.
Using the union bound, the resulting bound for the Euclidean distance between the means holds with probability at least $1-\frac{1}{100}$.

Both algorithms need to be fed with an initial solution.
Thus, for each data set we created 30 sets of \emph{initial model parameters}.
For each initial parameter vector we proceeded as proposed in \cite{dasgupta07}.
That is, we drew the $K$ means $\mu_k$ uniformly at random from the input $X$, set the covariances to $\Sigma_k=I_D\cdot\frac{1}{2D}\min_{i\neq k}\|\mu_k-\mu_i\|^2$, and assigned the weight $\frac{1}{K}$ to each component.
To get comparable results, we always started both algorithms with the same initial model parameters.
Due to the randomization, for each initial model we perform 100 different runs and study the average behavior of the algorithms given a fixed initial solution.

\subsection{Results}\label{subsec:experiments_results}

Regarding the artificial data sets, the experiments for the different combinations of $D$ and $K$ led to essentially similar results.
Therefore, in the following we only discuss two particular artificial data sets with 
$N=1\,000\,000$ and $N=10\,000$, which where both generated from the same parameter vector with $D=K=10$.
Furthermore, we only depict results for some selected initial solutions since we observed that the respective results hardly differed from each other.

\paragraph{Log-Likelihood.} Some characteristic results of our first type of tests are shown in Fig.~\ref{fig:cost_gen}.
Similar to box plots, the dark gray line marks the median, the gray ribbon ranges from the lower to the upper quartile, while the light gray ribbon ranges from the minimum to the maximum of all data. 
For the majority of our experiments the log-likelihood of the solutions almost coincide (cf. Fig.~\ref{fig:gen1000000_I2}). 
Generally speaking, for small values of $N$ one can not expect that the SEM algorithm yields parameter estimates close those of the EM.
This is due to the law of large numbers, i.\,e. the influence of a single sampling step of the SEM algorithm is larger for smaller $N$.
Indeed, for the artificial data set with $N=10\,000$ we sometimes observe a small difference between the log-likelihood of the computed solutions as depicted in Fig.~\ref{fig:gen10000_I25}.
In some cases, this leads to final solutions with a log-likelihood that differs from those of the EM algorithm. 
As shown in Fig.~\ref{fig:gen10000_I13} and Fig.~\ref{fig:gen10000_I19}, the log-likelihood may be better or worse.

\begin{figure}
\centering
\subfloat[$N=1\,000\,000$]{
		\label{fig:gen1000000_I2}		
		\includegraphics[width=0.23\textwidth]{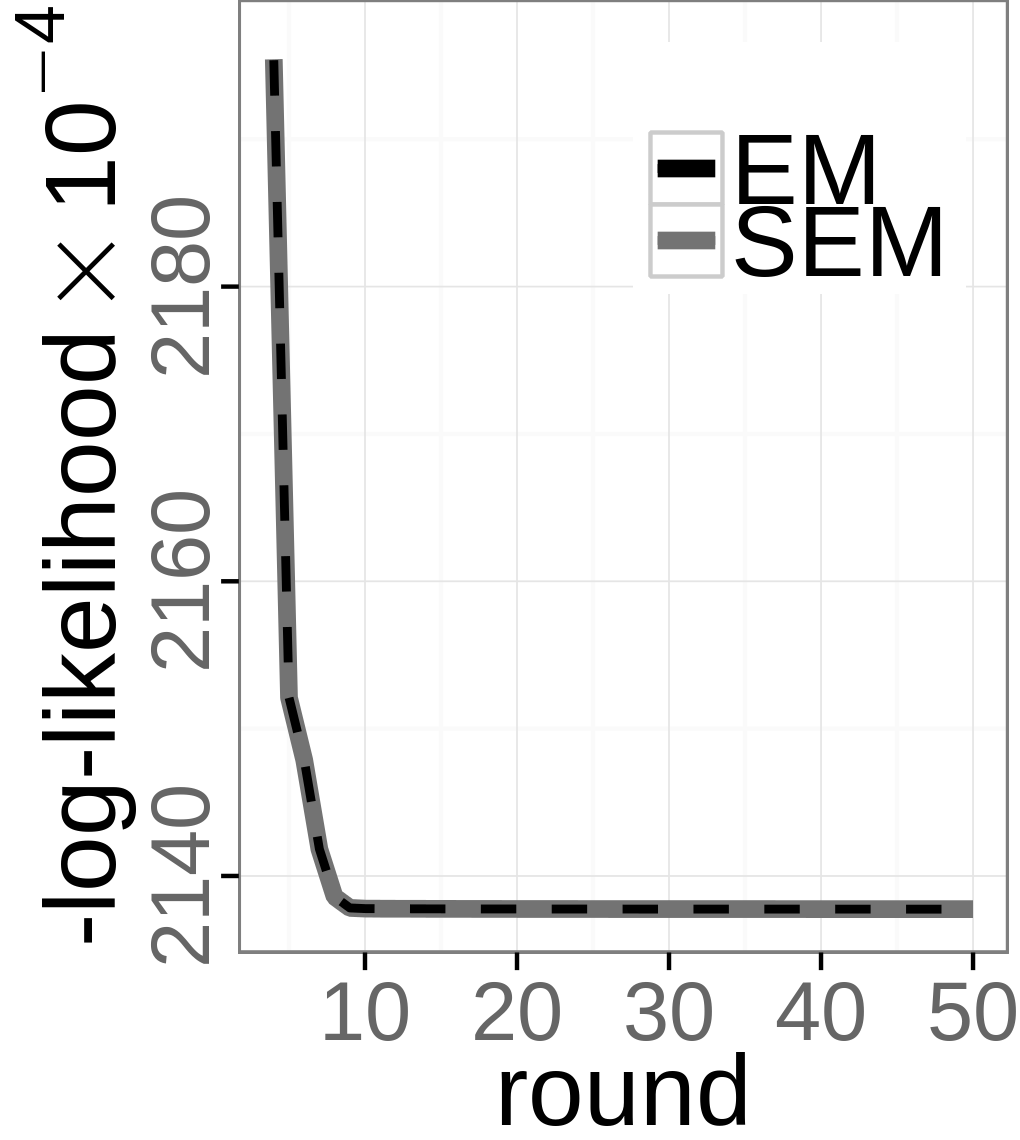}
}
\subfloat[$N=10\,000$]{
		\label{fig:gen10000_I25}		
		\includegraphics[width=0.23\textwidth]{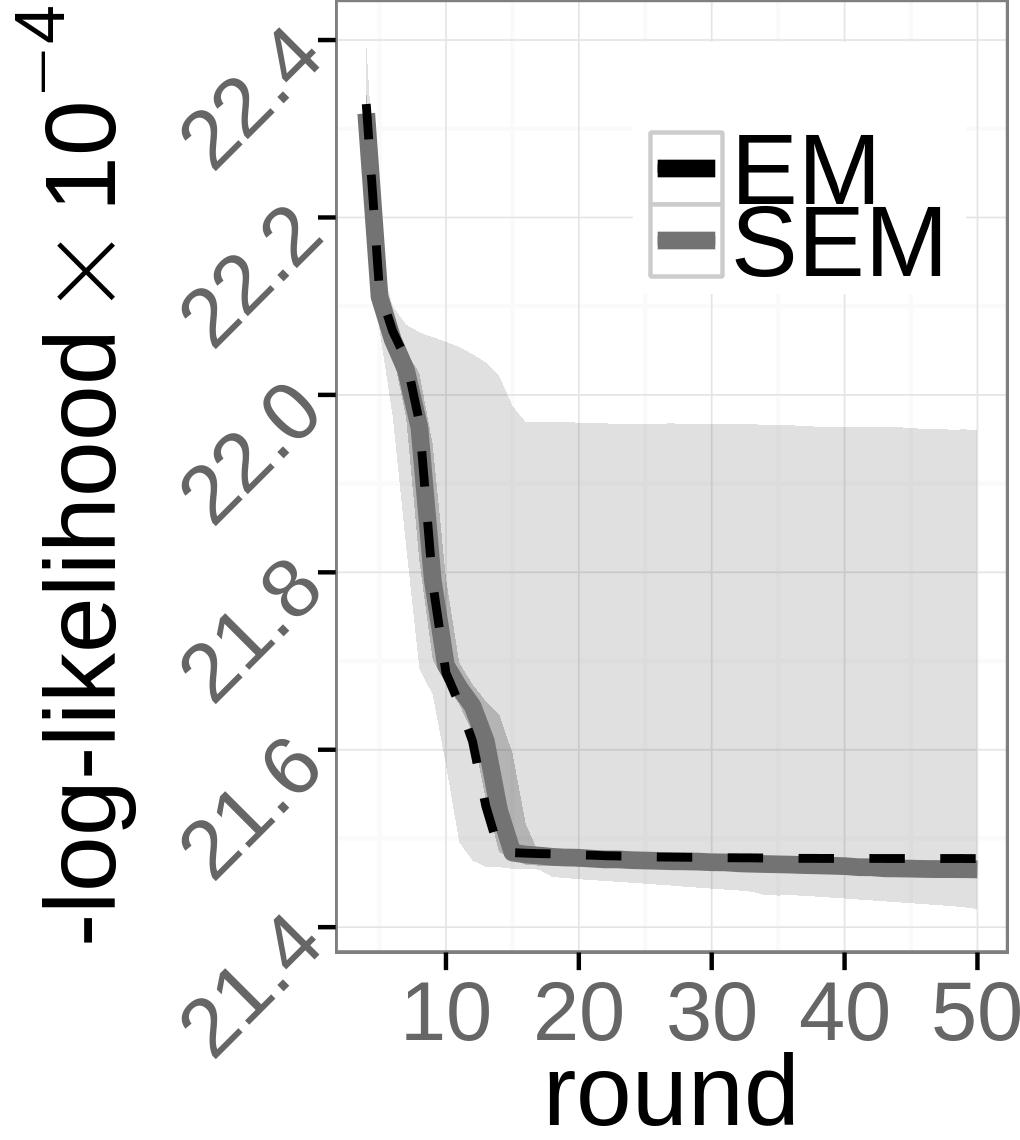}
}
\subfloat[$N=10\,000$]{
		\label{fig:gen10000_I13}		
		\includegraphics[width=0.23\textwidth]{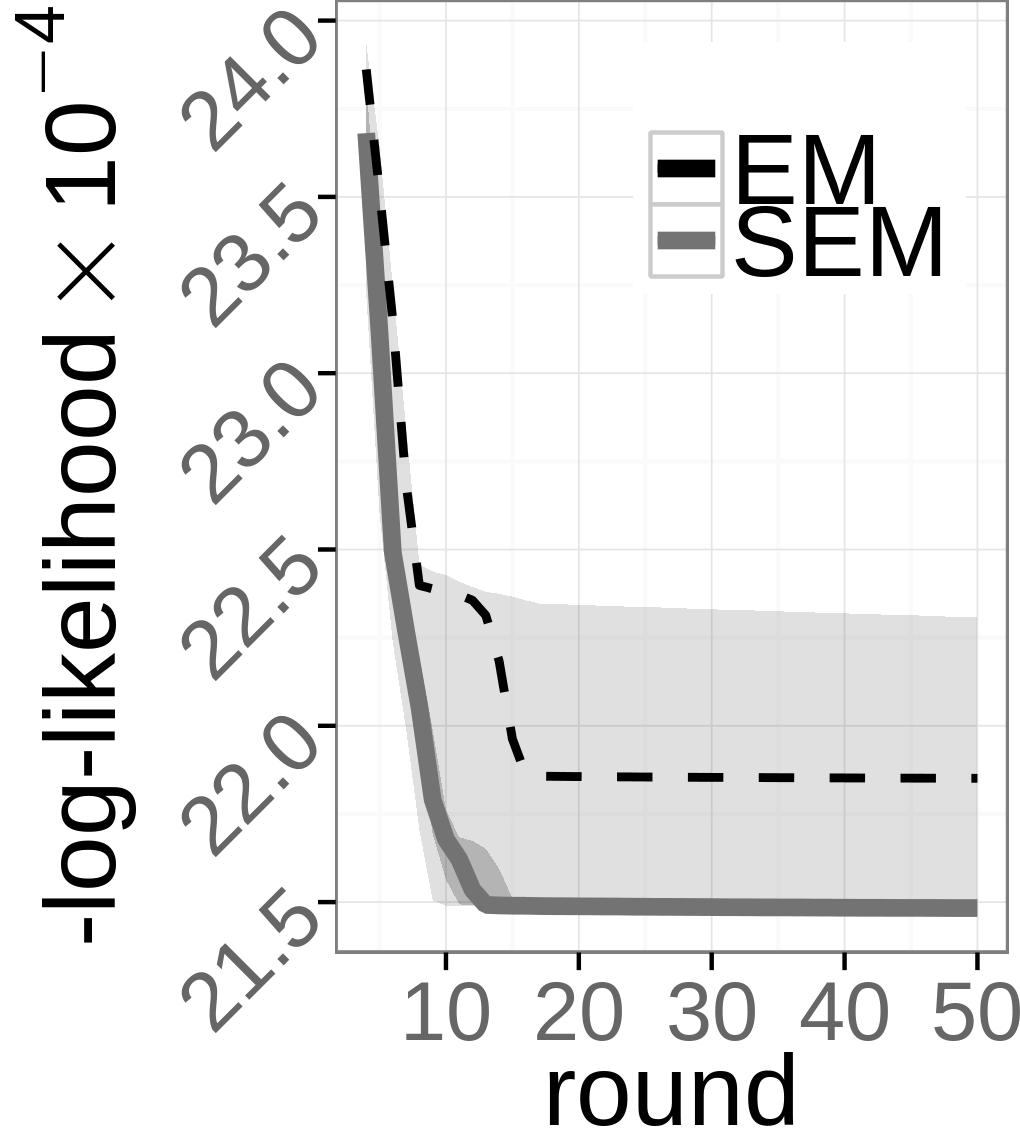}
}
\subfloat[$N=10\,000$]{
		\label{fig:gen10000_I19}		
		\includegraphics[width=0.23\textwidth]{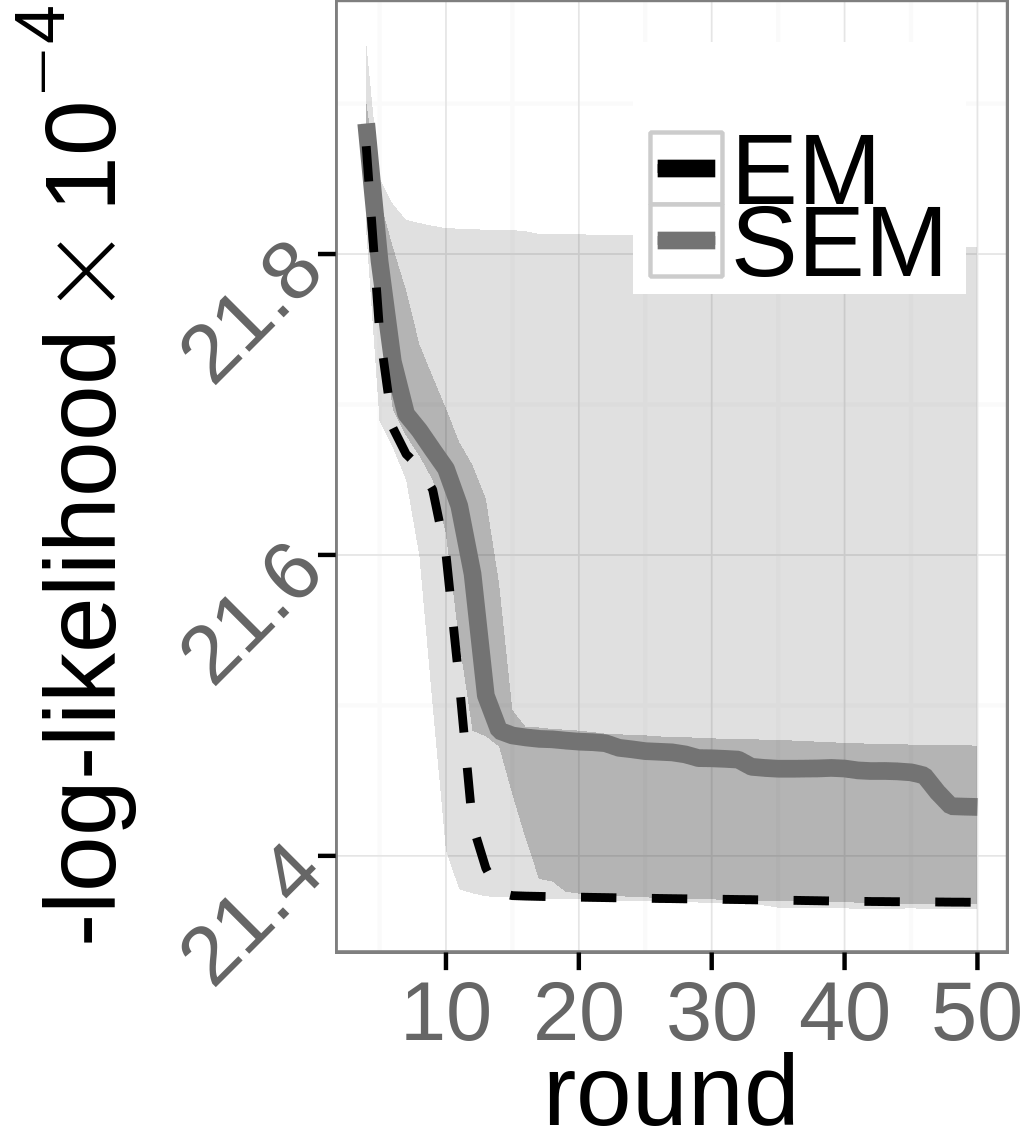}
}
\caption{Negative log-likelihood of parameters computed by the EM and SEM algorithm for the selected artificial data sets and different initial solutions.
Fig.~\ref{fig:gen10000_I25} to Fig.~\ref{fig:gen10000_I19} depict the results for the same artificial data set with $N=10\,000$ but different initial solutions. }
\label{fig:cost_gen}
\end{figure} 

\paragraph{Intermediate Solutions.} Some typical results of our second type of tests are shown in Fig.~\ref{fig:diff_aloihsb} to Fig.~\ref{fig:diff_gen10000}.
For the interpretation of the results, let $\Delta = \max_d \Delta_d$, $\Gamma_\mu = \sqrt{D}\Delta$, and $\Gamma_\Sigma = D\Delta^2$. Notice that the differences of the weights, means, and covariances are bounded by 1, $\Gamma_\mu$, and $\Gamma_\Sigma$, respectively.

For the normalized real world data sets $\Delta_d=1$. 
In relation to the spread, the parameter vectors are very similar.
For instance, regarding the ALOI data set, the weights, means and covariances differ by at most $0.3$, $0.02\cdot\Gamma_\mu$, and $0.0005\cdot\Gamma_\Sigma$, respectively.
In most cases, only in the first couple of rounds the difference between the parameters is larger.

For the artificial data sets, the spread in each dimension is approximately 40.
Regarding the data set with $N=1\,000\,000$, we observe the same behavior as for the real world data sets.
In some experiments we observe that the difference in the parameters of single components are substantially larger than those of the remaining components.
However, the differences are still small (cf. Fig.~\ref{fig:diff_gen1000000}).
That is, the weights, means and covariances differ by at most $0.0015$, $0.0015\cdot\Gamma_\mu$, and $0.0001\cdot\Gamma_\Sigma$, respectively.
Again, for small values of $N$ one can not expect good parameter estimates, which is indeed the case for data sets with $N=10\,000$.
In fact, the differences increase by a factor of at most $50$.
However, our first type of tests shows that this does not necessarily result in a different likelihood.

\begin{figure}
\begin{minipage}{.45\textwidth}
\centering
\subfloat[Difference $w_k^{EM}-w_k^{SEM}$ between the weights.]{
		\label{fig:aloi_k3_weight}
		\includegraphics[width=\textwidth]{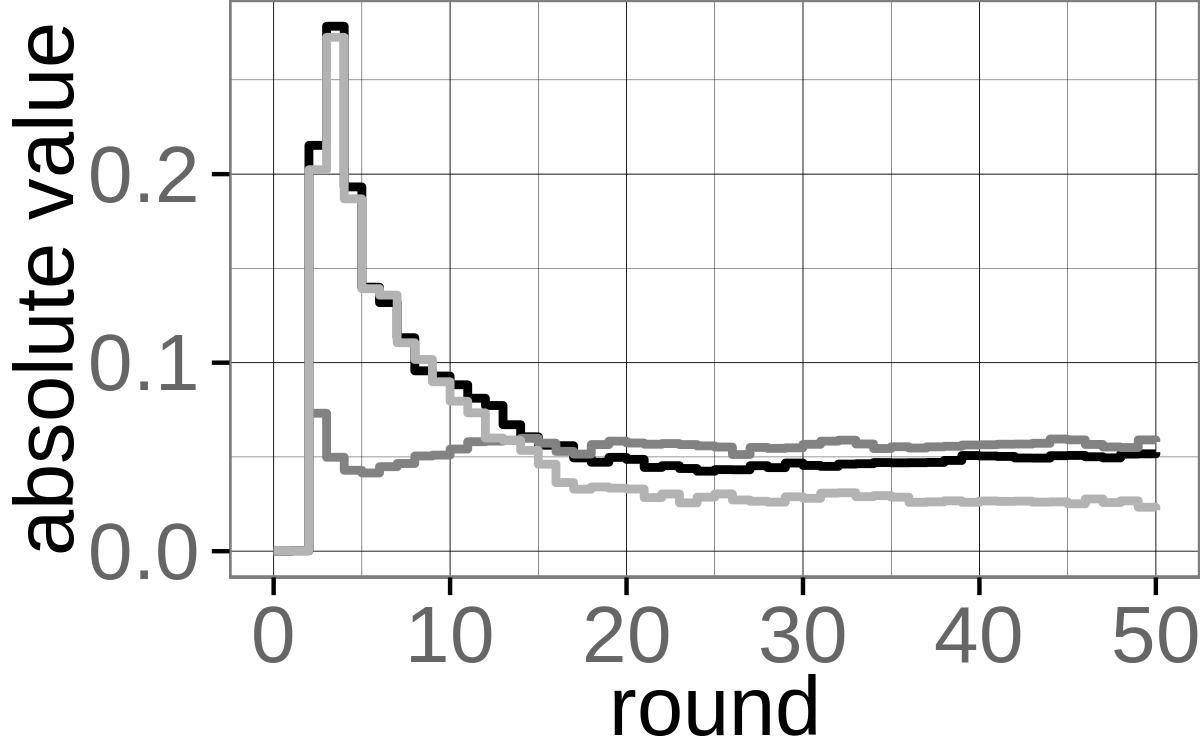}
}

\subfloat[Difference $\mu_k^{EM}-\mu_k^{SEM}$ between the means.]{
		\label{fig:aloi_k3_mean}
		\includegraphics[width=\textwidth]{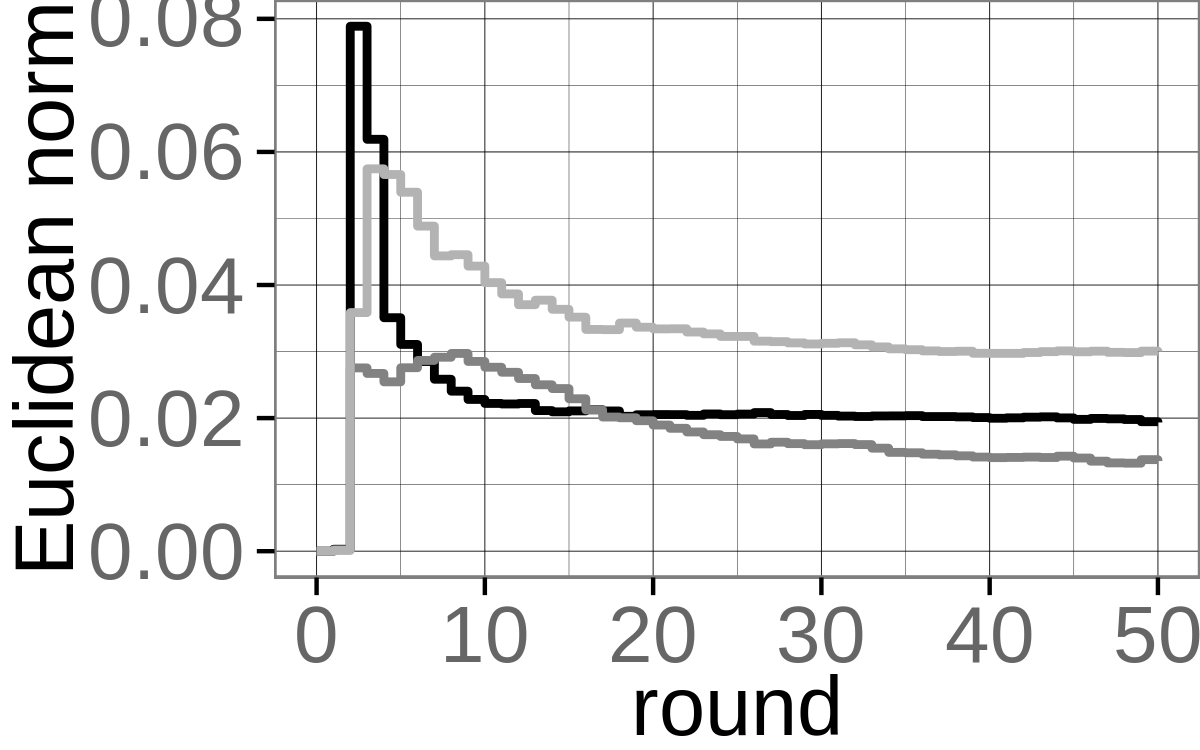}
}

\subfloat[Difference $\Sigma_k^{EM}-\Sigma_k^{SEM}$ between the covariances.]{
		\label{fig:aloi_k3_covar}
		\includegraphics[width=\textwidth]{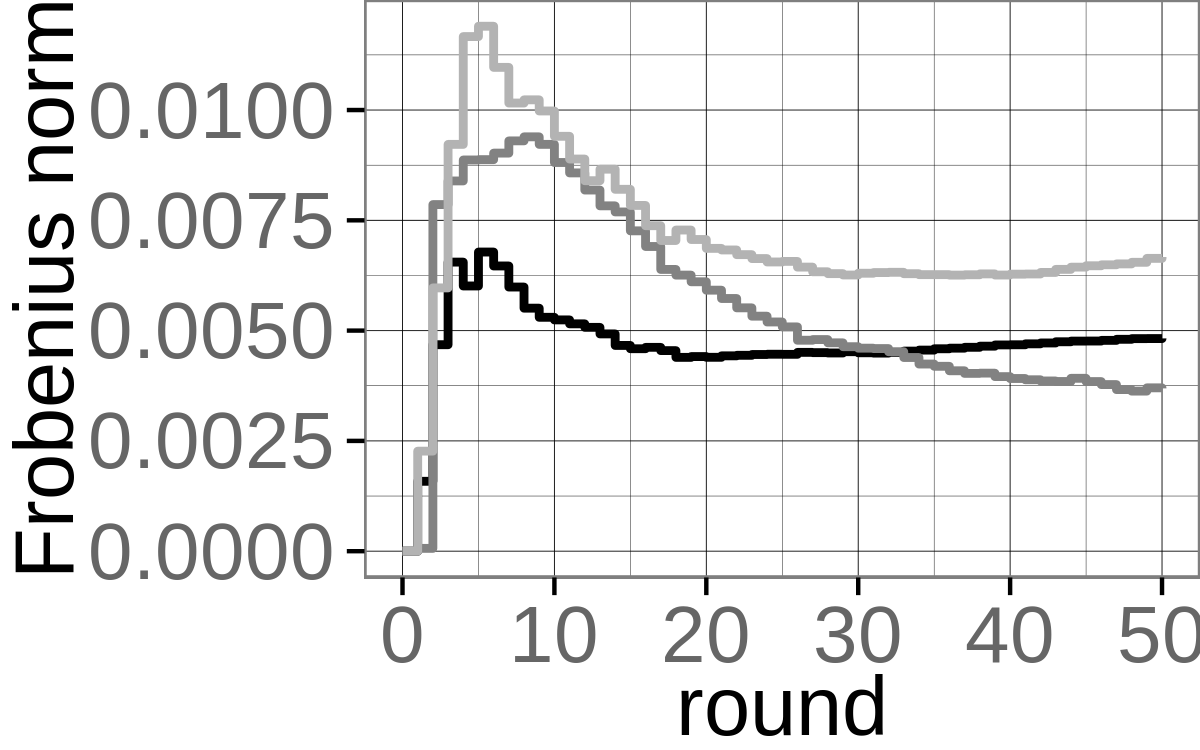}
}
\caption{Comparison of intermediate solutions of the EM and SEM algorithm given the \emph{Normalized ALOI} data set with $D=27$, $K=3$, $N=110\,250$, $\Delta_d=1$, $\Gamma_\mu\approx 5.2$, and $\Gamma_\Sigma= 27$.
All three figures depict the results for the same initial solution.
Each figure consists of three graphs, each depicting the difference for one component.
}
\label{fig:diff_aloihsb}
\end{minipage}
\qquad
\begin{minipage}{.45\textwidth}
\centering
\subfloat[Difference $w_k^{EM}-w_k^{SEM}$ between the weights.]{
		\label{fig:forest_k10_i3_weight}
		\includegraphics[width=\textwidth]{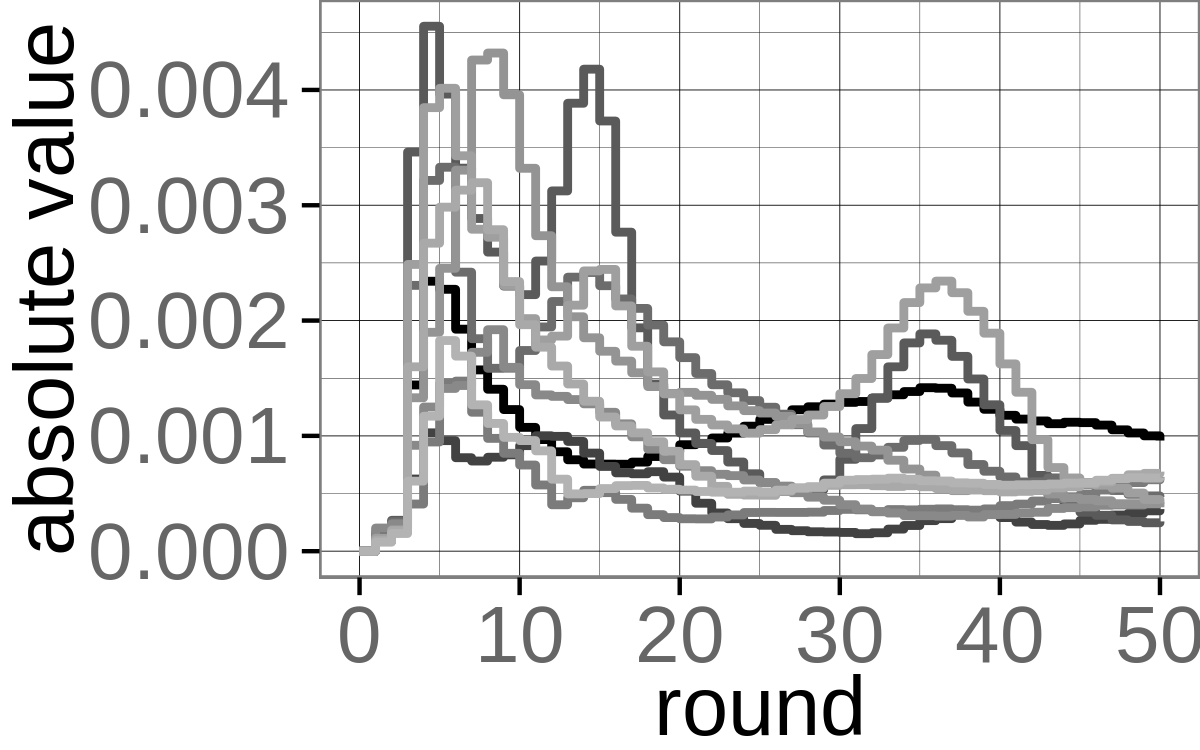}
}

\subfloat[Difference $\mu_k^{EM}-\mu_k^{SEM}$ between the means.]{
		\label{fig:forest_k10_i3_mean}
		\includegraphics[width=\textwidth]{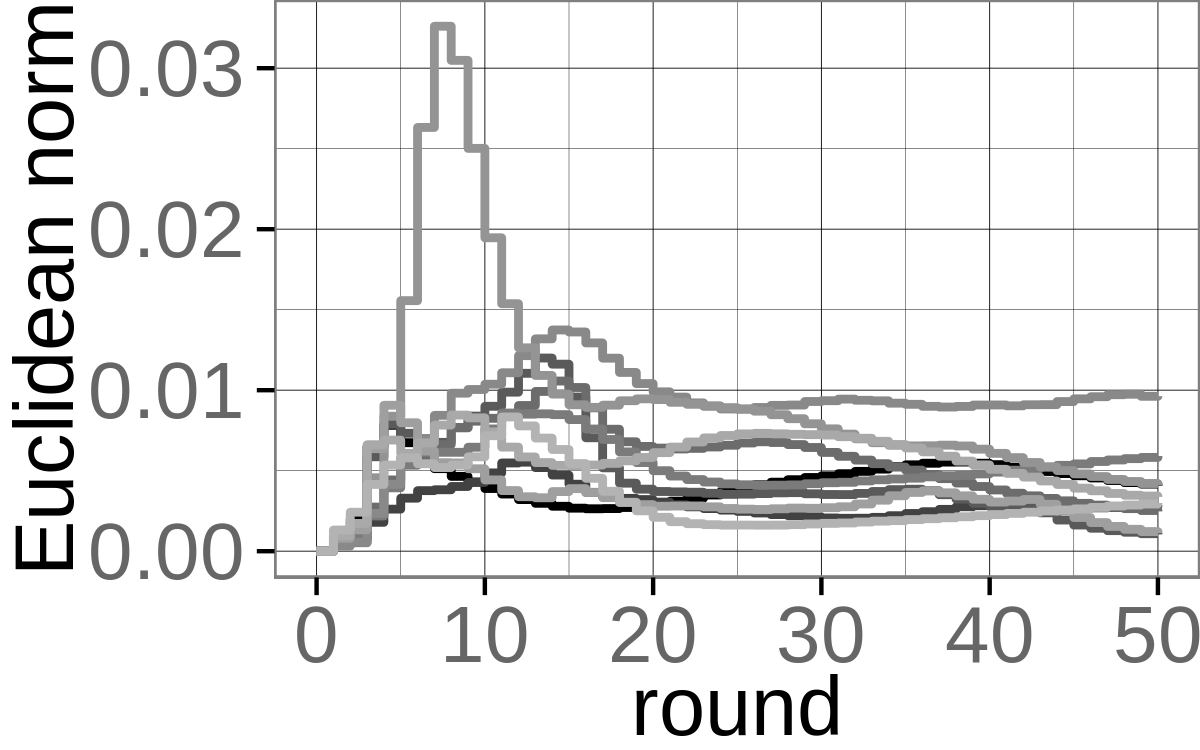}
}

\subfloat[Difference $\Sigma_k^{EM}-\Sigma_k^{SEM}$ between the covariances.]{
		\label{fig:forest_k10_i3_covar}
		\includegraphics[width=\textwidth]{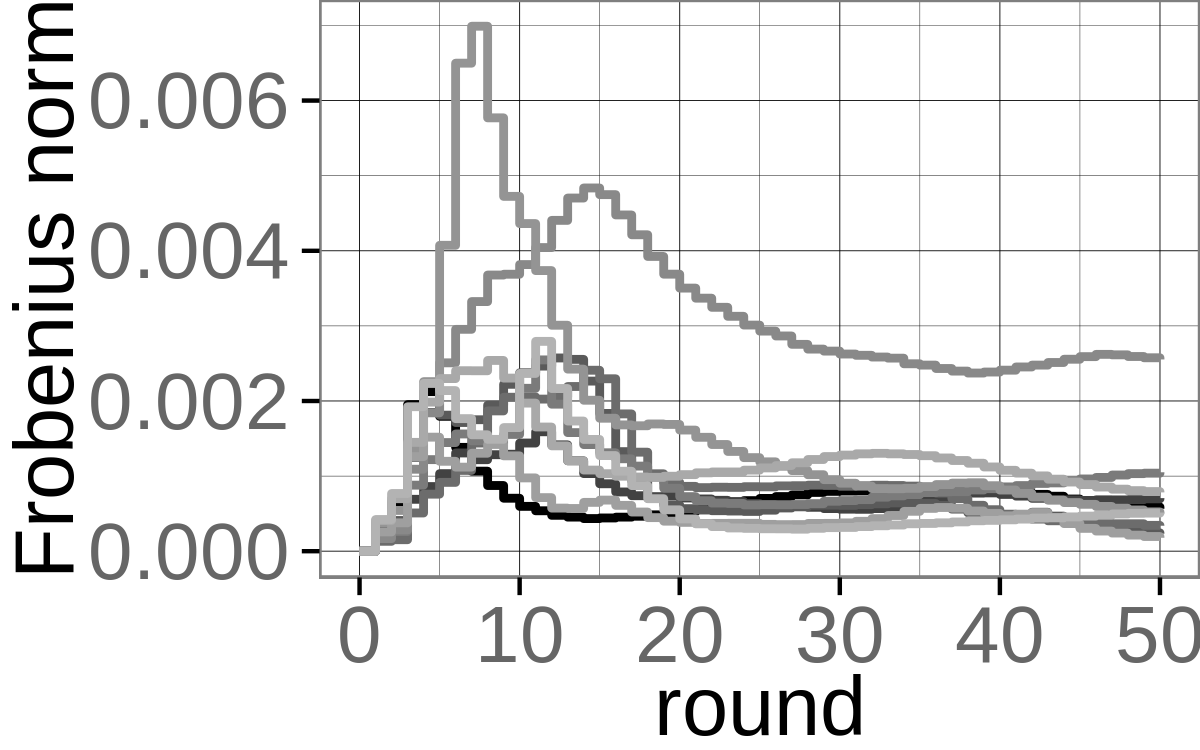}
}
\caption{Comparison of intermediate solutions of the EM and SEM algorithm given the \emph{Normalized Forest Covertype} data set with $D=K=10$, $N=581\,012$, $\Delta_d=1$, $\Gamma_\mu\approx 3.2 $, and $\Gamma_\Sigma = 10$.
All three figures depict the results for the same initial solution.
Each figure consists of ten graphs, each depicting the difference for one component.}
\label{fig:diff_forest}
\end{minipage}
\end{figure}

\begin{figure}
\begin{minipage}{.45\textwidth}
\centering
\subfloat[Difference $w_k^{EM}-w_k^{SEM}$ between the weights.]{
		\label{fig:gen_g3_w2_i2_weight}	
		\includegraphics[width=\textwidth]{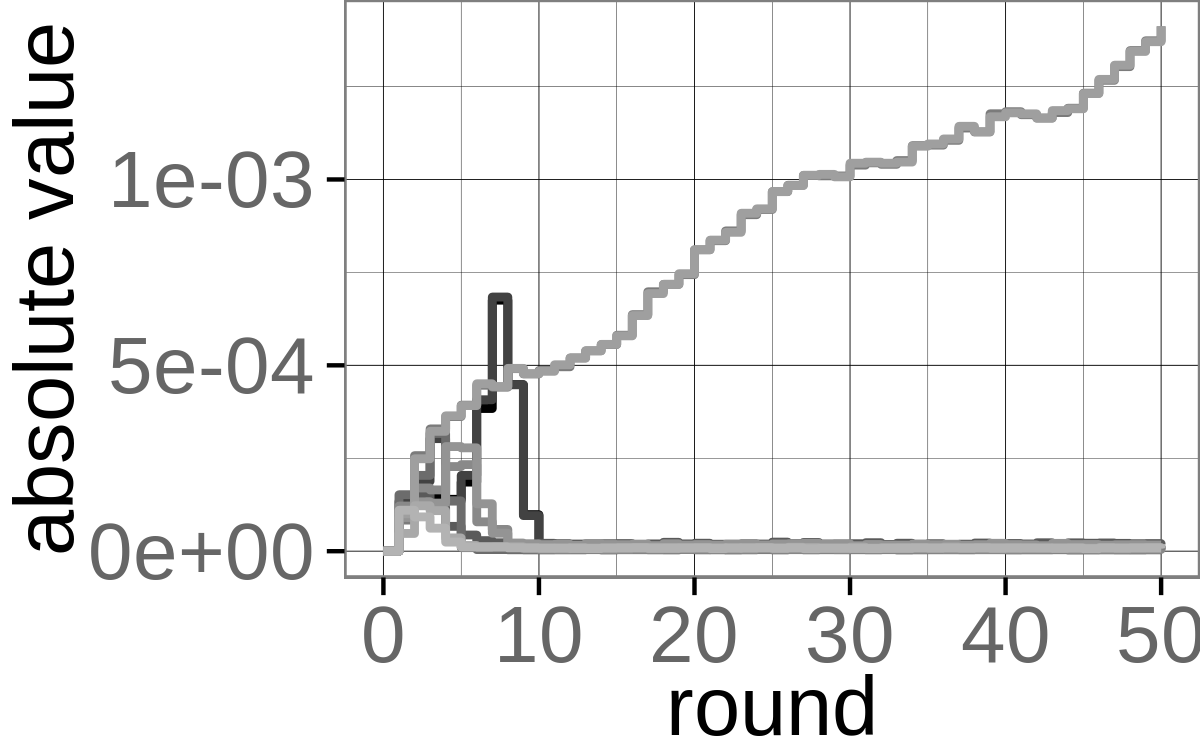}
}

\subfloat[Difference $\mu_k^{EM}-\mu_k^{SEM}$ between the means.]{
		\label{fig:gen_g3_w2_i2_mean}	
		\includegraphics[width=\textwidth]{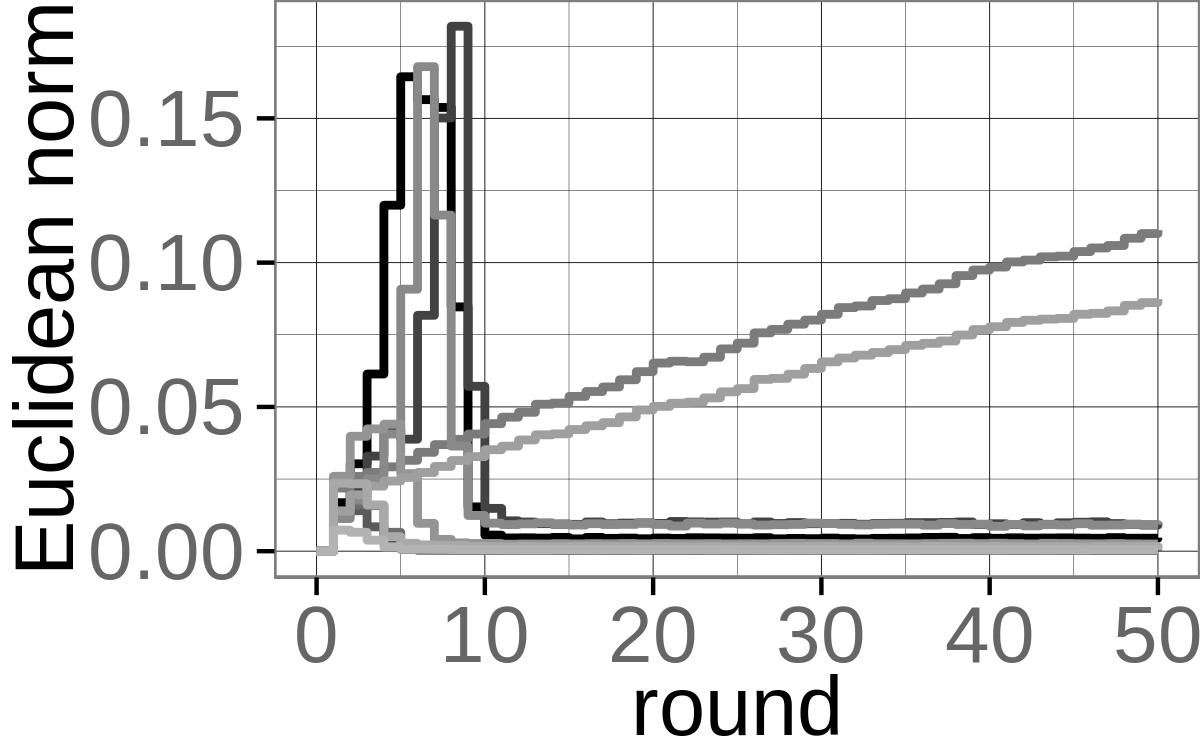}
}

\subfloat[Difference $\Sigma_k^{EM}-\Sigma_k^{SEM}$ between the covariances.]{
		\label{fig:gen_g3_w2_i2_covar}	
		\includegraphics[width=\textwidth]{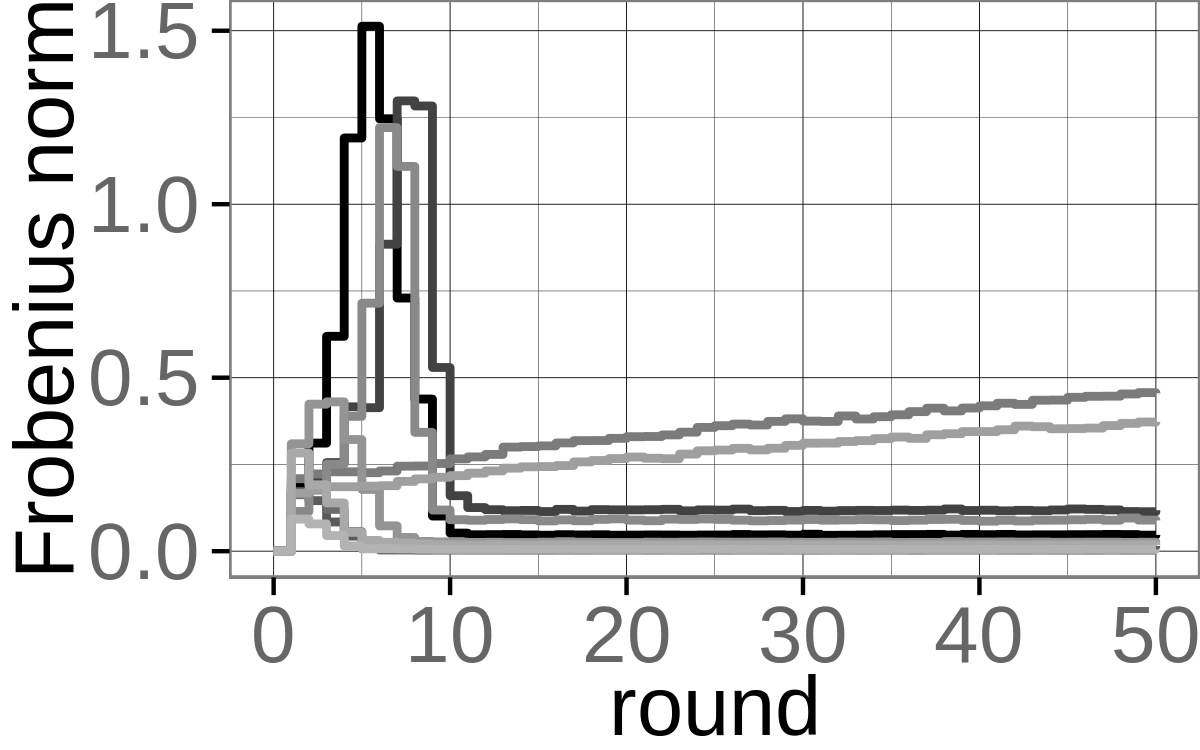}
}
\caption{Comparison of intermediate solutions of the EM and SEM algorithm given the selected \emph{artificial} data set with $D=K=10$, $N=1\,000\,000$, $\Delta_d=1$, $\Gamma_\mu\approx130$, and $\Gamma_\Sigma\approx16\,000$.
All three figures depict the results for the same initial solution.
Each figure consists of ten graphs, each depicting the difference for one component.}
\label{fig:diff_gen1000000}
\end{minipage}
\qquad
\begin{minipage}{.45\textwidth}
\centering
\subfloat[Difference $w_k^{EM}-w_k^{SEM}$ between the weights.]{
		\label{fig:gen10000_weight}	
		\includegraphics[width=\textwidth]{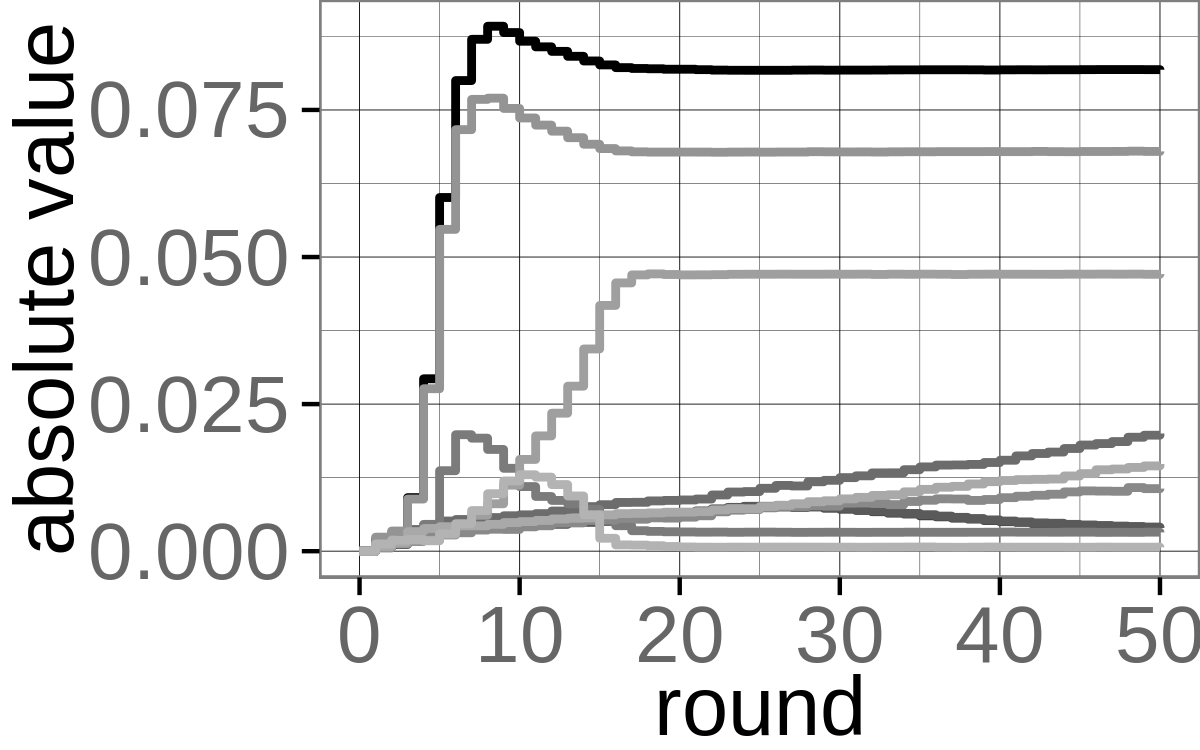}
}

\subfloat[Difference $\mu_k^{EM}-\mu_k^{SEM}$ between the means.]{
		\label{fig:gen10000_mean}	
		\includegraphics[width=\textwidth]{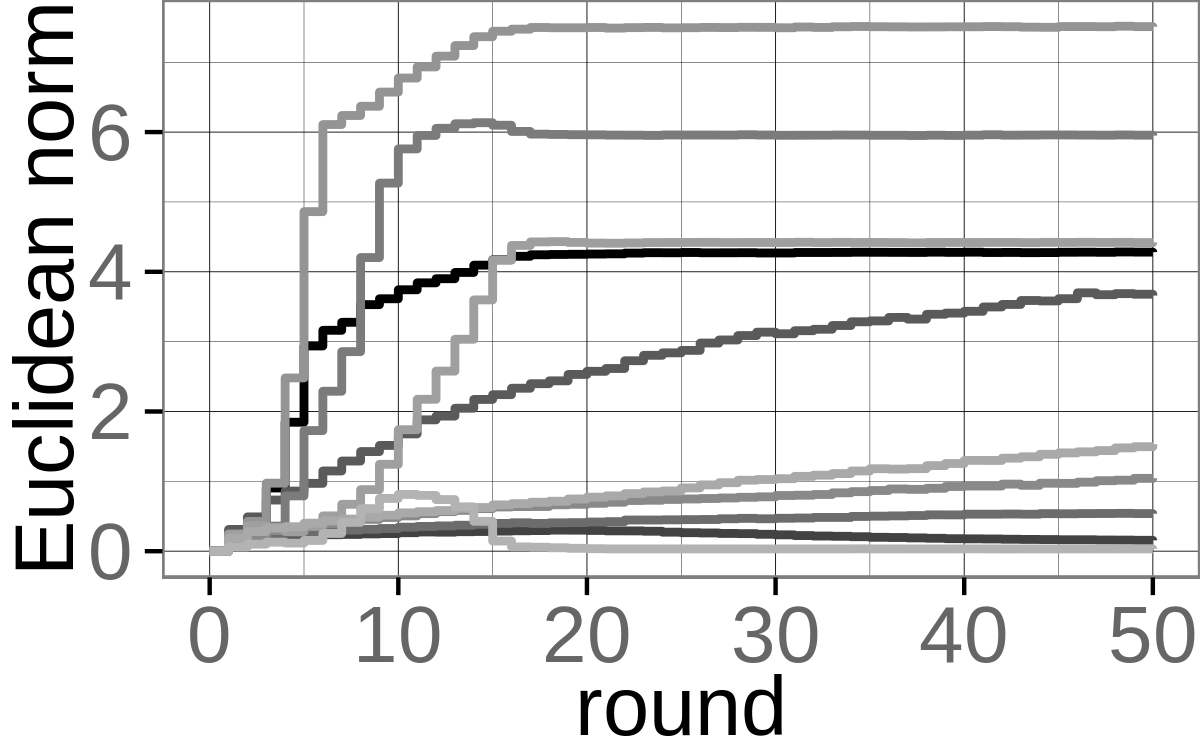}
}

\subfloat[Difference $\Sigma_k^{EM}-\Sigma_k^{SEM}$ between the covariances.]{
		\label{fig:gen10000_covar}	
		\includegraphics[width=\textwidth]{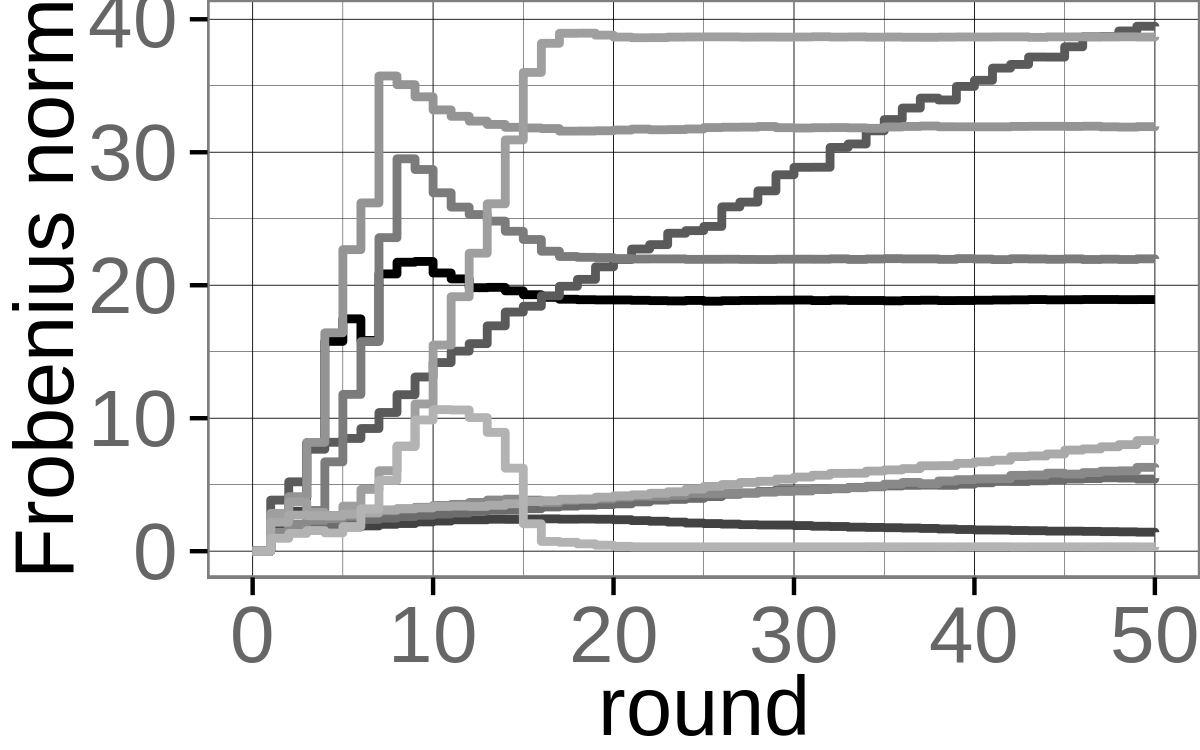}
}
\caption{Comparison of intermediate solutions of the EM and SEM algorithm given the selected \emph{artificial} data set with $D=K=10$, $N=10\,000$, $\Delta_d=1$, $\Gamma_\mu\approx130$, and $\Gamma_\Sigma\approx16\,000$ 
All three figures depict the results for the same initial solution.
Each figure consists of ten graphs, each depicting the difference for one component.}
\label{fig:diff_gen10000}
\end{minipage}
\end{figure}

\paragraph{Theoretical Bounds.} For our third type of tests, we depicted some characteristic results in Fig.~\ref{fig:theoretical_bounds}.
As in our first two types of tests, we expect worse results for the smaller artificial data set with $N=10\,000$.
Indeed, for the data set with $N=1\,000\,000$ our bound is smaller than $0.0054\cdot \Gamma_\mu$, while for $N=10\,000$ it is bounded by $0.065\cdot\Gamma_\mu$ (cf. Fig.~\ref{fig:gen10000_sem}). 
In Sec.~\ref{sem_gmm} we already discussed the limited applicability of our bounds for small responsibilities $r_k = N\cdot w^{EM}_k$.
Thus, the larger bounds observed in Fig.~\ref{fig:gen10000_sem} are not surprising, since smaller values of $N$ result in smaller values of $r_k = N\cdot w^{EM}_k$.

Furthermore, we get a bound up to $0.012\cdot\Gamma_\mu$ for the ALOI data set and $0.002\cdot\Gamma_\mu$ for the Forest data set.
The similar development of our bound and the actual difference indicates the accuracy of our analysis.
Moreover, we observe that the actual difference is significantly smaller than our bound, as it is to be expected.
The largest differences occur in the first couple of rounds matching our previous observations.

\begin{figure}
\centering
\subfloat[Normalized Forest Covertype data set ($K=10$).]{
		\label{fig:forest_k10_sem}		
		\includegraphics[width=0.45\textwidth]{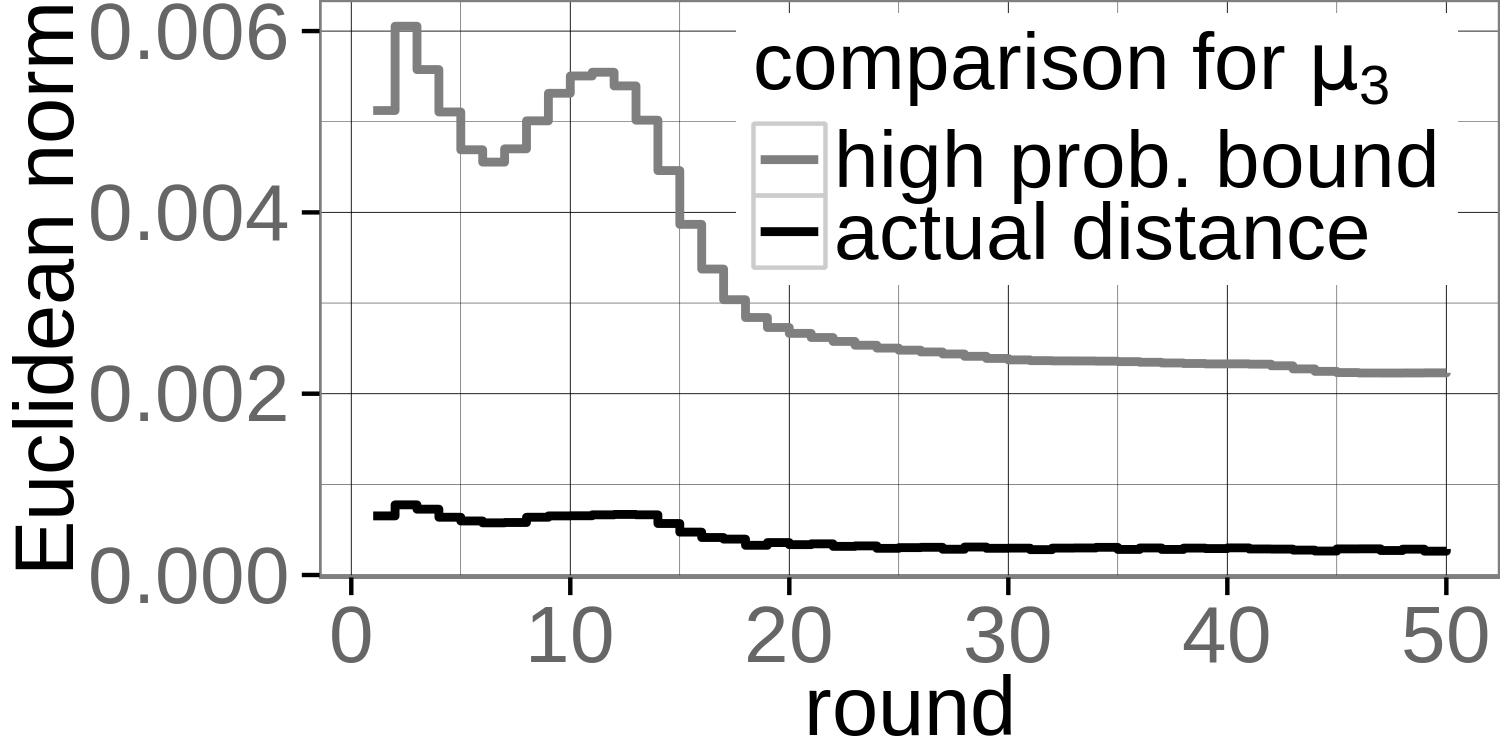}
}\qquad
\subfloat[$N=1\,000\,000$]{
		\label{fig:gen_g3_w2_i2_b1_sem}		
		\includegraphics[width=0.45\textwidth]{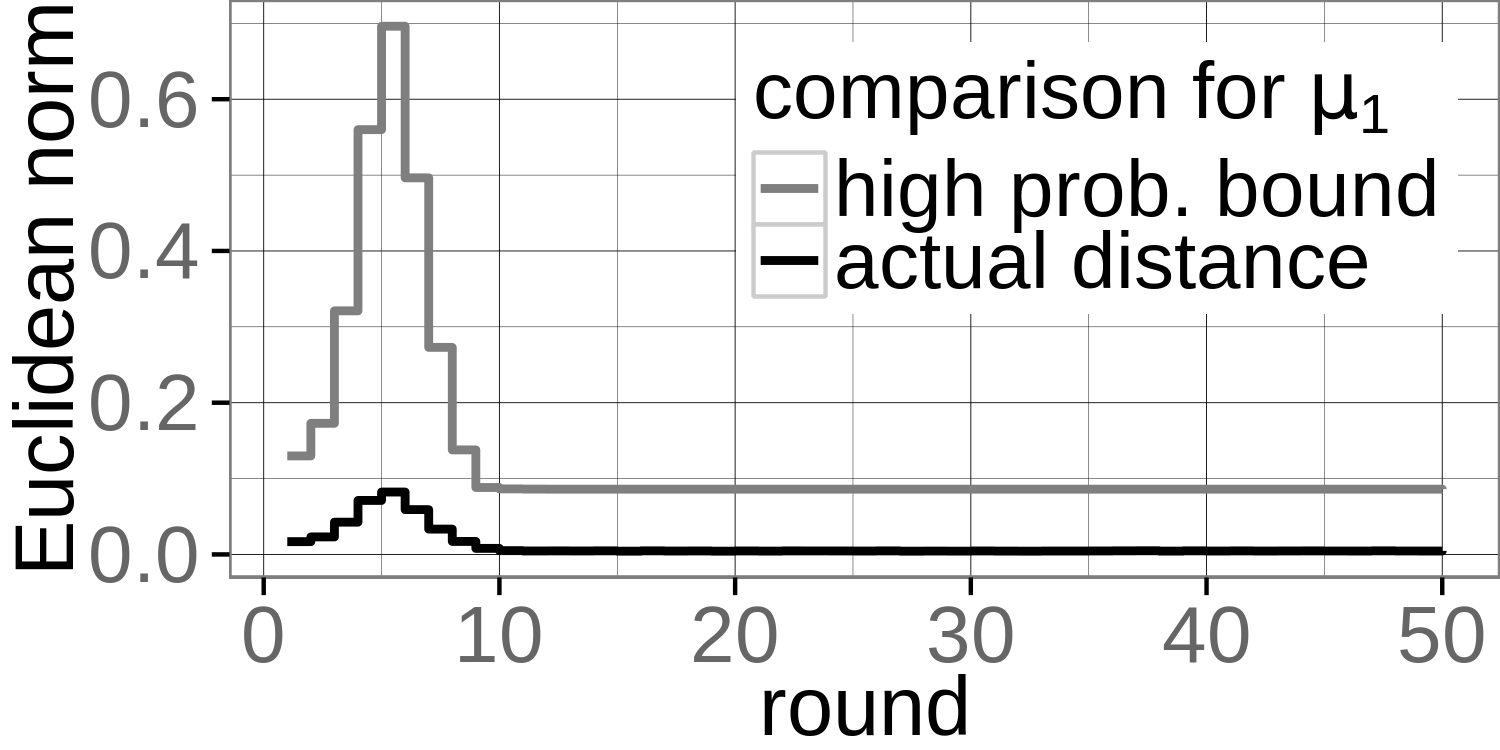}
}

\subfloat[Normalized ALOI data set ($K=3$).]{
		\label{fig:aloi_k3_sem}		
		\includegraphics[width=0.45\textwidth]{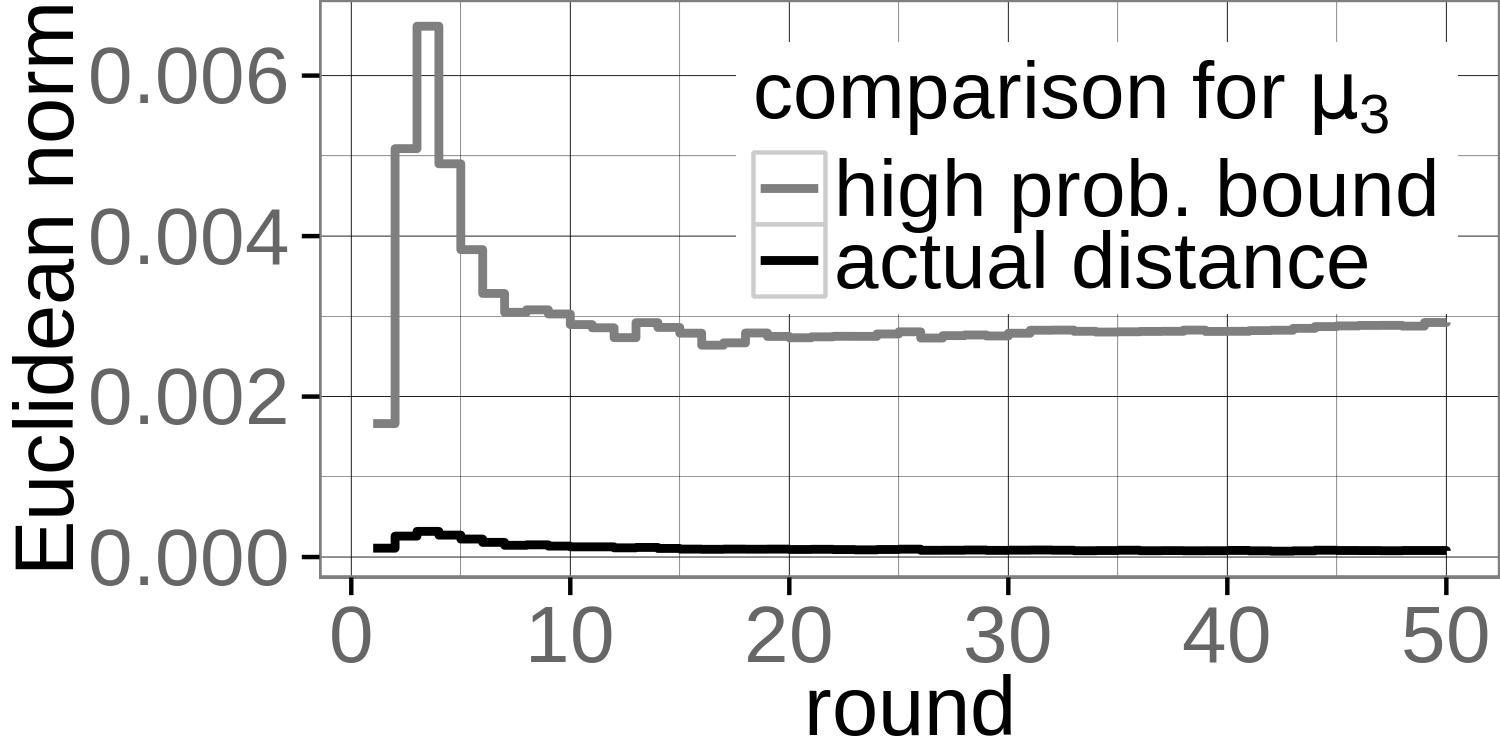}
}\qquad
\subfloat[$N=10\,000$]{
		\label{fig:gen10000_sem}		
		\includegraphics[width=0.45\textwidth]{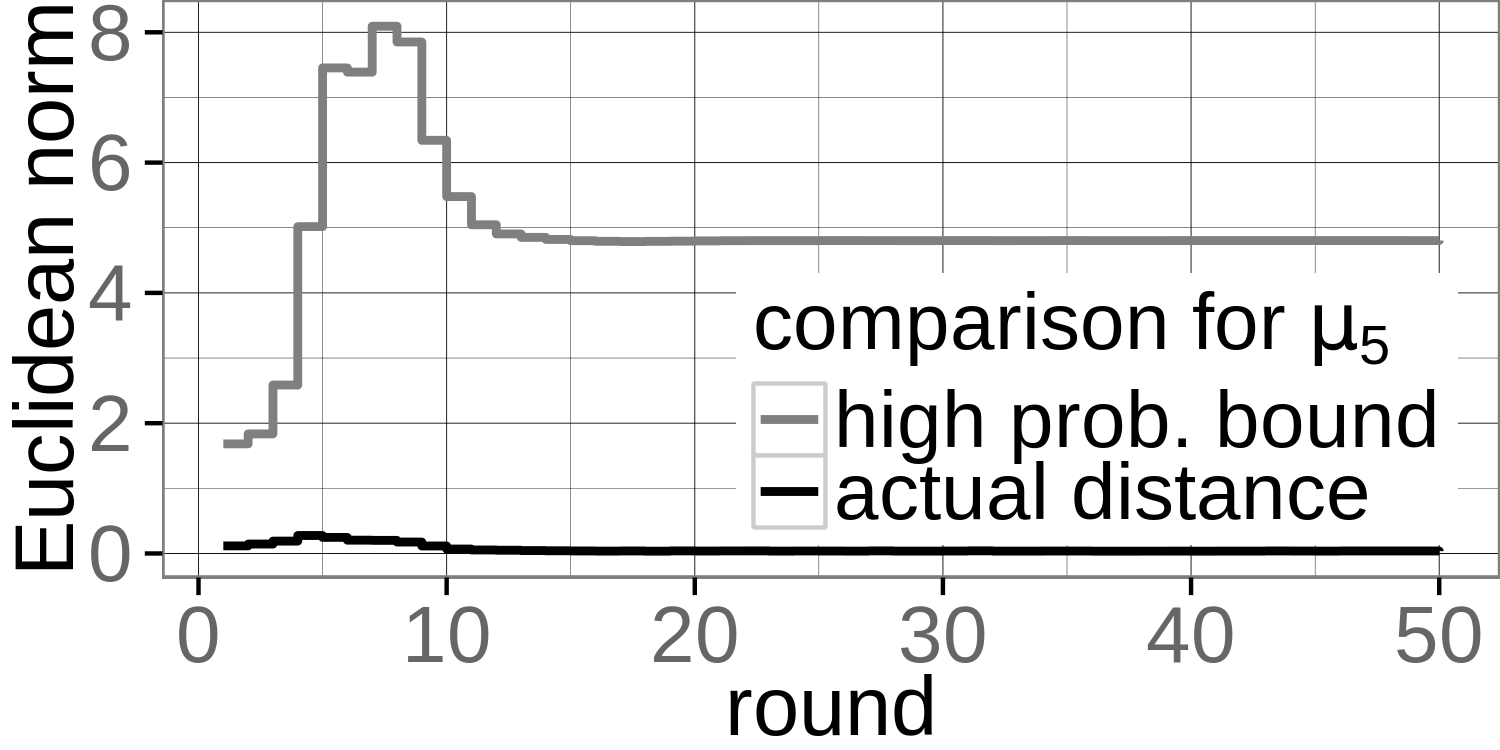}
}
\caption{Experimental difference and theoretical bound on the difference between the means for real world and selected artificial data sets and selected initial solutions.}
\label{fig:theoretical_bounds}
\end{figure}

\paragraph{Speedup.}
Additionally to the quality of the computed solutions, we also measured the running time of the algorithms.
For the SEM algorithm we observed a speedup of a factor 2 up to a factor 3 in all our experiments, as predicted in our running time analysis in Sec.~\ref{runtime}.

%
%
\bibliographystyle{splncs}
\bibliography{theoretical_and_experimental_comparison_of_the_em_and_sem_algorithm}

\end{document}